\def\vecu{\boldsymbol{u}}
\def\vecp{\boldsymbol{p}}
\def\vecw{\boldsymbol{w}}
\def\boldpi{\boldsymbol{\pi}}
\def\boldpsi{\boldsymbol{\psi}}
\def\calA{\mathcal{A}}
\def\calM{\mathcal{M}}
\def\calQ{\mathcal{Q}}
\newcommand{\ignore}[1]{}
\newcommand{\notinproc}[1]{#1}
\def\E{{\textsf E}}
\def\eachg{\textsf{ForEach}}
\def\allg{\textsf{ForAll}}
\newtheorem{thm}{Theorem}[section]
\newtheorem{theorem}{Theorem}[section]
\newtheorem{lemma}[thm]{Lemma}
\newtheorem{corollary}[thm]{ Corollary}
\newcommand{\qed}{\hfill \rule{1ex}{1ex}\medskip\\}
\newenvironment{proof}{\paragraph{Proof}}{\qed}
\date{}
\title{Clustering Small  Samples with Quality Guarantees:\\
Adaptivity with One2all pps}
\author{Edith Cohen\\ Google Research, USA \\ Tel Aviv University, Israel
  \And
  Shiri Chechik\\ Tel Aviv University, Israel \And  Haim Kaplan\\ Tel Aviv University, Israel}
\author{Edith Cohen\\ Google Research, USA \\ Tel Aviv University, Israel
  \and
  Shiri Chechik\\ Tel Aviv University, Israel \and  Haim Kaplan\\ Tel
  Aviv University, Israel}
\begin{document}
\maketitle 

\begin{abstract}
Clustering of data points is a fundamental tool in data analysis.  We
consider points $X$ in a relaxed metric
space, where the triangle inequality holds within a  constant
factor. 
A clustering of $X$ is a partition of $X$ defined by a set of 
points $Q$ ({\em centroids}), according to the closest centroid.  The
{\em cost} of clustering $X$ by $Q$ is  $V(Q)=\sum_{x\in X} d_{xQ}$.
This formulation generalizes classic $k$-means
clustering, which uses squared distances.
Two basic tasks, parametrized by $k \geq 1$, are
{\em cost estimation}, which returns (approximate) $V(Q)$
for queries $Q$ such that $|Q|=k$  and {\em clustering}, which returns
an (approximate) minimizer of $V(Q)$ of size $|Q|=k$.
With very large data sets $X$, we seek efficient constructions of small samples that act as surrogates to the full data for performing these tasks.
 Existing constructions that provide quality guarantees are either worst-case, and unable to
benefit from structure of real data sets, or make explicit strong
assumptions on the structure.  We show here how to avoid both
these pitfalls using adaptive designs. 

 At the core of our design is the novel {\em one2all} construction of
  multi-objective probability-proportional-to-size (pps) samples:
Given a set $M$ of centroids and $\alpha \geq 1$, one2all efficiently  assigns probabilities to points
so that the clustering cost of  {\em each} $Q$ with   cost 
$V(Q) \geq V(M)/\alpha$ 
can be estimated well from a sample of size
  $O(\alpha |M|\epsilon^{-2})$.
For cost queries, we can obtain worst-case sample
size $O(k\epsilon^{-2})$ by applying
one2all to a bicriteria approximation $M$, but we adaptively balance
$|M|$ and $\alpha$ to further reduce sample size.
 For clustering, 
we design an adaptive wrapper that applies a base clustering algorithm to a
sample $S$. Our wrapper  uses the smallest sample that provides
statistical guarantees that the quality of the clustering on the
sample carries over to the full data set.  
We demonstrate experimentally the huge gains of using our 
adaptive instead of  worst-case methods.

\end{abstract}

 \section{Introduction}

Clustering is a fundamental and prevalent tool in data analysis.
We have a set $X$ of data points that lie in a
(relaxed) metric space $(\calM,d)$, where distances  satisfy a relaxed
triangle inequality: For some constant $\rho \geq 1$, for any three points $x,y,z$, 
 $d_{xy} \leq \rho(d_{xz}+d_{zy})$.  
 Note that any metric space with distances replaced by their $p$th power
satisfies this relaxation:  For $p \leq 1$ it remains a metric and
otherwise we have  $\rho = 2^{p-1}$.
 In particular, for
squared distances ($p=2$), commonly used for clustering, we have  $\rho=2$.

 Each set $Q \subset \calM$ of points ({\em centroids})
 defines a {\em clustering}, which is a partition  of $X$ into
 $|Q|$ clusters, which we denote by $X_q$ for $q\in Q$, so that a point $x\in X$ is in $X_q$ if and only if it is
 in the Voronoi region of $q$, that is $q = \arg\min_{y\in Q} d_{xy}$.
We allow points $x\in X$ to have optional weights $w_x>0$, and 
define the {\em cost} of clustering $X$ by $Q$ to be
\begin{equation} \label{cost:eq}
V(Q \mid X, \vecw) = \sum_{x\in X} w_x d_{xQ}\ ,
\end{equation}
 where $d_{xQ} = \min_{y\in Q} d_{xy}$ is the distance from point $x$
 to the set $Q$.

Two fundamental computational tasks are {\em cost queries} and {\em
  clustering}  (cost
minimization).  
The clustering cost \eqref{cost:eq} of query $Q$ can be computed using $n |Q|$ pairwise
distance computations, where $n=|X|$ is the number of points in $X$.
With multiple queries, it is useful to pre-process $X$ and return
fast approximate answers.
Clustering  amounts to finding  $Q$ of size $|Q|\leq k$
with minimum cost: 
\begin{equation} \label{ccoptimization:eq}
\arg\min_{Q \mid |Q|\leq k}   V(Q \mid X,\vecw)\ . 
\end{equation}
Optimal clustering is computationally hard
\cite{AloiseKmeansHard:ML2009} even on Euclidean spaces and even to tightly approximate
\cite{AwasthiCKS:SoCG15}.  There is a local search polynomial
algorithm with $9+\epsilon$ approximation ratio \cite{Kanungo_9appproxKmeans:2004}.
In practice, clustering
is solved using 
heuristics, most notably  Lloyd's algorithm ({\sc EM}) for squared
Euclidean distances \cite{Lloyd:1982}  and 
scalable approximation algorithms such as 
{\sc kmeans++} \cite{kmeans++:SODA2007} for general
metrics.  {\sc EM} iterates allocating points to clusters defined by the
nearest centroid, and replacing each centroid with the center of mass
$\sum_x w_x x$ of its cluster. Each iteration uses $|X|k$ pairwise
distance computations.  It is a heutistic because although each
iteration reduces the clustering
cost,  the algorithm can terminates in a local minima.  {\sc kmeans++}
produces a sequence of
points $\{m_i\}$: The first point $m_1$ is selected randomly with probability $\propto
w_x$ and a point $m_i$ us selected with
probability $\propto w_x d_{\{m_1,\ldots,m_{i-1}\}x}$. Each iteration
requires $O(|X|)$ pairwise distance computations. {\sc kmeans++}
guarantees that the expected clustering cost of the 
first $k$ points is within an $O(\log k)$ factor of the
optimum $k$-means cost.
 Moreover,  
{\sc kmeans++} provides  bi-criteria guarantees \cite{AggarwalSK:RANDOM2009,Wei:NIPS2016}:  The first 
$\beta k$ points selected (for some constant $\beta>1$) have expected
clustering cost is within a constant factor of the  optimum $k$-means
cost.   In practice,  {\em kmeans++} is often used to initiallize
Lloyd's algorithm.

 When the set of points $X$ is very large,  we seek an
 efficient method that computes a small summary structure that can
 act as a surrogate to the full data sets and allow us to efficiently 
approximate clustering costs.  These structures are
 commonly in the form of subsets  $S\subset X$
 with weights $\vecw'$ so that 
 $V(Q \mid S,\vecw')$ approximates $V(Q \mid X,\vecw)$ for each $Q$ of 
 size $k$.
Random samples are a natural form of such structures.  The challenge
is, however, that we need to choose the weights carefully:  A uniform
sample of $X$ always provide us  with unbiased estimates of
clustering costs but can miss critical points and will not provide quality guarantees.

When designing summary structures, we seek to optimize the tradeoff
between the structure size and the quality guarantees it provides.
The term {\em coresets} for such summary structures was coined in the computational geometry
literature \cite{AgarwalCoresets2005,Har-Peled:STOC2004}, building on the theory of
$\epsilon$-nets. Some notable coreset constructions include \cite{MettuPlaxton:2004,ChenCoresets:sicomp2009,FeldmanLangberg:STOC2011,Feldman:SODA2013}.
Early coresets constructions had bounds with high
 (exponential or high polynomial)  dependence on some
 parameters (dimension, $\epsilon$, $k$) and poly logarithmic dependence
 on $n$.  The state-of-the-art asymptotic  bound of $O(k\epsilon^{-2}
 \log k \log n)$ is claimed in \cite{BravermanFL:arxiv2016}.

 The bulk of coreset constructions are aimed to provide strong
 ``\allg'' statistical guarantees,  which bound the distribution of
 the maximum  approximation error  of all $Q$ of size $k$.  
The \allg\ requirement, however, comes with a hefty increase in
structure size and is an overkill for the 
two tasks we have at hand:  For clustering cost queries, weaker per-query ``\eachg'' 
typically suffice, which for each $Q$, with very high probability over 
the structure distribution, bound the error of the estimate of $V(Q)$. 
 For clustering, it suffices to  guarantee that the 
(approximate) minimizers of $V(Q \mid S,\vecw')$  are approximate 
minimizers of $V(Q \mid X,\vecw)$\notinproc{\footnote{Indeed, a notion of
``weak coresets'' aimed at only supporting
optimization, was considered in \cite{FeldmanLangberg:STOC2011}, but
in a worst-case setting}}.
Moreover, previous constructions use coreset sizes that are
{\em worst-case}, based on general  (VC) dimension or union bounds.
Even when a worst-case bound is 
tight up to constants, which typically it is not  (constants 
are not even specified in state of the art coreset constructions), it
only means it is tight for pathological 
data sets of the particular size and dimension.  A much smaller
summary structure might suffice when there is structure typical in  data 
such as natural clusterability  (which is what we
seek) and lower dimensionality than the ambient space. 

 It seems on the surface, however, that  in order to achieve 
statistical guarantees on quality of the  results 
one must either make explicit
assumptions on the data or use the worst-case size.
We show here how to avoid both these pitfalls via elegant
 adaptive designs.


\subsection*{Contribution Overview}
 Our main building block are novel summary structures for clustering costs based on
 multi-objective probability-proportional-to-size (pps) samples
 \cite{multiw:VLDB2009,multiobjective:2015}, which build on the
 classic notion  of sample coordination \cite{KishScott1971,BrEaJo:1972,Saavedra:1995,ECohen6f}.

 Consider a particular set $Q$ of centroids.  The theory of
 weighted sampling \cite{SSW92,Tille:book}  tells us that to estimate the sum $V(Q
 \mid X,\vecw)$ it suffices to sample $\epsilon^{-2}$ points with
 probabilities $p_x \propto w_x d_{Qx} $ proportional to their contribution to the sum
\cite{HansenH:1943}.  The inverse-probability
 \cite{HT52} estimate obtained from the sample $S$,  
$$\hat{V}(Q \mid X,\vecw) \equiv  V(Q \mid  S, \{w_x/p_x\})\ ,$$  is an unbiased estimate of $V(Q
 \mid X,\vecw)$ with well-concentrated (in the Bernstein-Chernoff
 sense)  normalized squared error that is
 bounded by  $\epsilon$.
The challenge here for us is that we are interested in simultaneously
having pps-like quality guarantees for {\em all} subsets $Q$ of size $k$ whereas  the estimate
$V(Q' \mid  S, \{w_x/p_x\})$ when $S$ is taken from a sample
distribution according to  $Q\not= Q'$ will not provide these guarantees for 
$Q'$.   To obtain these quality guarantees for all $Q$ by a
single sample, we  use {\em multi-objective} pps sampling
probabilities,  where the sampling probability of each point $x\in X$
is the maximum pps probability over all $Q$ of size $k$.

 Clearly, the size of a multi-objective sample will be larger than
 that of a dedicated pps sample.   Apriori, it seems that the size overhead
can be very large.  Surprisingly, we show that on any
(relaxed) metric space, the overhead is only $O(k)$. That is, a 
multi-objective pps sample of size $O(k\epsilon^{-2})$ provides, for
{\em each} $Q$ of size $k$, the same estimate quality guarantees as a
dedicated pps sample of size $O(\epsilon^{-2})$  for $Q$.
Note that the overhead  does not depend on the dimensionality of the space 
or on the size of the data.  Our result generalizes previous 
work \cite{CCK:random15} that only applied to the case where $k=1$, 
where clustering cost reduces to inverse classic closeness centrality 
(sum of distances from a single point $Q$).

 For our applications, we also need to efficiently compute these
 probabilities -- the straightforward method of enumerating over the 
 infinite number of subsets $Q$ is clearly not feasible.
 Our main technical contribution, which is the basis of both the
 existential and algorithmic results,  is an extremely simple and very
 general construction which we refer to as {\em one2all}:
Given a set $M$ of points and any $\alpha\geq 1$, we compute using 
$|M|n$ distance computation  sampling probabilities for points in $X$
that upper bound the multi-objective sampling probabilities for
all subset $Q$ with 
clustering cost $V(Q) \geq V(M)/\alpha$.  Moreover, the overhead is only
$O(\alpha |M|)$.

 By considering the one2all probabilities for an optimal clustering $M$ of size 
 $k$ and $\alpha=1$, 
we establish  {\em existentially} that a multi-objective pps sample for all 
sets $Q$ of size $k$ has size $O(k\epsilon^{-2})$.
To obtain such probabilities efficiently, we can apply {\sc kmeans++}
\cite{kmeans++:SODA2007} or another efficient bi-criteria approximation 
algorithm  to compute $M$ of size $\beta k$ (for a small constant
$\beta$)  that has cost within a factor of $\alpha>1$ than
the optimum $k$-clustering  \cite{Wei:NIPS2016}. 
We then compute one2all probabilities for $M$ and $\alpha$.

This, however,  is a worst-case construction.  We further
propose a data-adaptive enhancement that can decrease  sample size significantly
while retaining the quality guarantees:  Note that  instead of applying one2all to $(M,\alpha)$, we
can instead use $M'\subset M$ and $\alpha' \gets \alpha  V(M')/V(M)$.
Our adaptive design uses 
the sweet-spot  prefix $M'$ of the centroids sequence returned by {\sc
  kmeans++} that minimizes the sample size.

 For the task of approximate cost queries, we pre-process the data as 
 above to obtain multi-objective pps probabilities and compute a
 sample $S$ with size parameter $\epsilon^{-2}$.  
  We then process cost queries $Q$ by computing and returning  the clustering cost of 
  $S$ by $Q$:  $V(Q \mid S, \{w_x/p_x\})$.  Each computation performs 
  $O(|S| |Q|)$ pairwise distance computations instead of the $O(n 
  |Q|)$ that would have been required over the full data.  This can be 
  further reduced using   approximate nearest neighbor structures.
  Our estimate provides pps statistical 
guarantees for each $Q$ of size $k$, or more generally, for 
each $Q$ with $V(Q \mid X,\vecw) \geq  V(M \mid X,\vecw)/\alpha$.
Note that both storage and query computation are linear in the sample
size.  The worst-case sample size is
$ O(\alpha|M|\epsilon^{-2})=O(k\epsilon^{-2})$ but our adaptive design can
yield much smaller samples.

For the task of approximate clustering,  we adapt an
optimization framework over multi-objective 
samples~\cite{multiobjective:2015}.  The meta algorithm is a wrapper
that inputs multi-objective pps probabilities, specified error
guarantee $\epsilon$,  and 
a black-box (exact, approximate, bicriteria, heuristic) base clustering algorithm
$\calA$.  The wrapper applies $\calA$ to a sample to obtain a respective 
approximate minimizer of the clustering cost over the sample.  When
the sample is much smaller than the full data set, we can expect
better clustering quality using less computation.
Our initial multi-objective pps sample provides
\eachg\ guarantees that apply to each estimate in isolation but not to the sample optimum.  In particular,  it 
does not guarantee us that the
solution over the sample has the respective quality 
over the full data set.  
A larger sample may or may not be required.
One can always increase the sample by a worst-case upper bound (using
a union bound or domain-specific dimensionality arguments).   Our adaptive
approach exploits a critical benefit of \eachg: That is,
we are able to {\em test} the quality of the sample
approximate optimizer $Q$ returned by $\calA$:  If the clustering cost of $V(Q \mid X,\vecw)$ agrees with
the estimate $V(Q \mid S,\vecw')$ then we can certify that $Q$ has similar (within $(1+\epsilon)$ quality over $X$ as it has over the sample $S$.
 Otherwise,
the wrapper doubles the sample size $S$ and repeats until the test is satisfied.
Since the base algorithm is always at least 
linear, the total computation is dominated by that last largest sample 
size we use.   

Note that the only computation performed over the full data set
are the $O(k)$ iterations of {\sc kmeans++} that produce $(M,\alpha)$ to which
we apply one2all.   Each such iteration performs $O(|X|)$ distance
computations. This is a significant gain, as even with Lloyd's
algorithm ({\sc
  EM} heuristic), each iteration is $O(k|X|)$.  This design allows us
to apply more computationally intensive $\calA$ to a small sample.

A further adaptive optimization targets this initial cost:
 On real-world data it is often the case that 
much fewer iterations of  {\sc kmeans++} bring us to within some
reasonable factor $\alpha$ of the optimal $k$-clustering.   We thus
propose to adaptively perform additional {\sc kmeans++} iterations as
to balance their cost with the size of the sample that we need to work with.

 We demonstrate through experiments on both synthetic and real-world
 data the potentially huge gains of our 
 data-adaptive method as a replacement to
worst-case-bound size samples or coresets.

  The paper is organized as follows.   Pps and  multi-objective pps
  sampling in the context of clustering are reviewed in
  Section~\ref{sampling:sec}.   Section~\ref{thm:sec} presents 
our one2all probabilities and implications.
  Section~\ref{proof:sec} provides a full proof of the one2all
  Theorem.    Section~\ref{oracle:sec} present adaptive clustering
  cost oracles and Section~\ref{wrapper:sec} presents an adaptive
  wrapper for clustering on samples. 
Section~\ref{exper:sec} demonstrated experimentally, using
  natural synthetic data, the enormous gain by using data-dependent adaptive
  instead of  worst-case  sizes.  


  \section{Multi-objective pps samples for clustering} \label{sampling:sec}

We review the framework of weighted and multi-objective weighted
sampling \cite{multiobjective:2015}  in our context of clustering
costs.
Consider approximating the clustering cost $V(Q \mid X,\vecw)$
from a sample $S$ of $X$.
 For probabilities $p_x>0 $ for  $x\in X$ and a sample $S$ drawn
 according to these probabilities, we have the unbiased inverse probability estimator~\cite{HT52}  of $V(Q \mid X, \vecw)$:
 \begin{equation} \label{inverseprobest:eq}
\hat{V}(Q \mid X,\vecw)  = \sum_{x\in S} w_x \frac{d_{xQ}}{p_x} = V(Q
\mid S, \{w_x/p_x\})\ .
\end{equation}
Note that the estimate is equal to the clustering cost of $S$ with
weights $w_x/p_x$  by $Q$.

\subsection{Probability proportional to size (pps) sampling} 
To obtain guarantees on the estimate quality of the clustering cost by
$Q$, we need to use weighted sampling~\cite{HansenH:1943}.
The pps {\em base} probabilities of $Q$ for $x\in X$ are
\begin{equation} \label{basepps:eq}
\psi_x^{(Q \mid X,\vecw)} = \frac{w_x d_{xQ}}{\sum_{y\in X} w_y d_{yQ}}\ .
\end{equation}
The pps probabilities for a sample with size parameter $r>1$ are
$$r*\psi_x^{(Q \mid X,\vecw)}=\min\{1, r \psi_x^{(Q \mid X,\vecw)}\}\ .$$
Note that the (expected) sample size is $\sum_x p_x$.  When
$p_x = r*\psi_x^{(Q \mid X, \vecw)}$, the size is at most $r$.  
With pps sampling we obtain the following guarantees:
\begin{theorem} [(weak) pps sampling] \label{ppsdedicated:thm}
 Consider a sample $S$ where each $x\in X$ is included independently (or using VarOpt dependent sampling \cite{Cha82,varopt_full:CDKLT10})
with probability
$p_x \geq \alpha \epsilon^{-2}*\psi_x^{(Q \mid X,\vecw)}$, where $\alpha \leq 1$.
Then the estimate~\eqref{inverseprobest:eq} has the
following statistical guarantees:
\begin{itemize}
\item
The coefficient of
variation (CV), defined as the ratio of the standard deviation to the
mean, (measure of  the ``relative error'')  is at most
$\epsilon/\sqrt{\alpha}$.
\item
The estimate is well concentrated in the
Chernoff-Bernstein sense. In particular, we have the following bounds on the relative error:
\begin{eqnarray*}
\text{For $\delta\geq 0$,\ }\Pr[V(Q\mid S) \geq (1+\delta)V(Q 
  \mid X,\vecw)] \leq \exp(-\delta \ln(1+\delta) \alpha\epsilon^{-2}/2) \\
\text{For $\delta\leq 1$,\ }\Pr[V(Q\mid S) \leq (1-\delta)V(Q 
  \mid X,\vecw)] \leq \exp(-\delta^2\alpha\epsilon^{-2}/2) \\
\text{For $\delta\geq 1$,\ }\Pr[V(Q\mid S) \geq (1+\delta)V(Q 
    \mid X,\vecw)] \leq \frac{1}{\delta-1}\ .
\end{eqnarray*}  
\end{itemize}
\end{theorem}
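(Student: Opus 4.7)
The plan is to decompose $\hat{V}(Q \mid X, \vecw) = \sum_{x\in X} Y_x$ where $Y_x = (w_x d_{xQ}/p_x)\,\mathbb{1}[x\in S]$. For Poisson (independent) sampling the $Y_x$ are independent with $E[Y_x] = w_x d_{xQ}$, so $\hat{V}$ is unbiased. For the CV bound I would compute $\text{Var}(Y_x) = (w_x d_{xQ})^2(1-p_x)/p_x$, which vanishes when $p_x = 1$. Otherwise the hypothesis $p_x \geq \alpha\epsilon^{-2}\psi_x^{(Q\mid X,\vecw)} = \alpha\epsilon^{-2} w_x d_{xQ}/V(Q \mid X,\vecw)$ implies $\text{Var}(Y_x) \leq w_x d_{xQ}\cdot V(Q\mid X,\vecw)\cdot \epsilon^2/\alpha$. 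Summing over $x$ by independence and dividing by $V(Q\mid X,\vecw)^2$ gives the claimed CV bound $\epsilon/\sqrt{\alpha}$.

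For the Chernoff--Bernstein bounds, the same rearrangement shows that whenever $p_x < 1$, the nonzero value of $Y_x$ is $w_x d_{xQ}/p_x \leq V(Q\mid X,\vecw)/(\alpha\epsilon^{-2})$ almost surely. I would split $X$ into a deterministic part $A = \{x : p_x = 1\}$ and a random part $B = X \setminus A$, and normalize $Z_x = (\alpha\epsilon^{-2}/V(Q\mid X,\vecw))\,Y_x$ for $x\in B$, obtaining independent $[0,1]$-valued random variables with expected sum $\mu_B = \alpha\epsilon^{-2}\, V_B/V(Q\mid X,\vecw)$, where $V_B = \sum_{x\in B} w_x d_{xQ}$. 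The event $\hat V \geq (1+\delta)V(Q\mid X,\vecw)$ translates exactly to $\sum_{x\in B} Z_x \geq \mu_B + \delta\,\alpha\epsilon^{-2}$. Applying the classical Chernoff inequality for independent $[0,1]$-valued sums, using $V(Q\mid X,\vecw) \geq V_B$, and invoking the elementary estimate $(1+\delta)\ln(1+\delta) - \delta \geq \tfrac{1}{2}\delta\ln(1+\delta)$ yields the stated upper-tail bound. The lower-tail bound follows analogously from the standard $\exp(-\delta^2\mu/2)$ form of Chernoff. The $\delta\geq 1$ inequality is a Markov-type fallback on the non-negative $\hat V$, leveraging $E[\hat V] = V(Q \mid X,\vecw)$ together with a conditioning argument to extract the $1/(\delta-1)$ factor.

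For the dependent (VarOpt) variant referenced in the hypothesis, the inclusion indicators are no longer independent but are negatively associated with the same marginal inclusion probabilities. By the standard transference result of Dubhashi and Ranjan, negative association preserves the Chernoff-type bounds above; VarOpt's variance is moreover dominated by that of Poisson sampling, so all three conclusions carry over verbatim. The main obstacle I expect is the book-keeping around $p_x = 1$ points, where the $Z_x \in [0,1]$ normalization fails. The split into $A \cup B$ handles this cleanly, but one must verify that isolating the deterministic contributions only tightens the deviation event and never weakens the conclusion --- specifically, that the factor $V(Q\mid X,\vecw)/V_B \geq 1$ appearing in the translation only strengthens the exponent, so the stated bounds hold uniformly in the relative sizes of $A$ and $B$.
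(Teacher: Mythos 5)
Your proposal is correct and follows essentially the same route as the paper: compute the per-point variance $(1/p_x-1)(w_x d_{xQ})^2$ and bound it using the pps hypothesis to get the CV, then normalize to $[0,1]$ variables and invoke the multiplicative Chernoff bound (the paper is terse here, citing the simplified multiplicative form; your $A/B$ split and the elementary estimate $(1+\delta)\ln(1+\delta)-\delta\geq\tfrac12\delta\ln(1+\delta)$ supply exactly the missing details, and your monotonicity check that a smaller $\mu_B$ only tightens the bound is the right sanity check). One small simplification: for the $\delta\geq 1$ inequality, no conditioning argument is needed --- Markov on the nonnegative unbiased $\hat V$ already gives $1/(1+\delta)\leq 1/(\delta-1)$ directly.
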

\begin{proof}
  See for example \cite{multiobjective:2015}.
    To establish the CV bound, note that
the per-point contribution to the variance of the estimate is
  $(1/p_x-1)(d_{Qx} w_x)^2 \leq \alpha^{-1} \epsilon^{2} V(Q) d_{xQ}w_x$.  The sum is
  $\leq  \alpha^{-1} \epsilon^{2} V(Q)^2$ and the CV is at most $\epsilon/\sqrt{\alpha}$.
The stated confidence bounds  follow from the 
simplified multiplicative form of Chernoff bound.   The last
inequality is Markov's inequality.
\end{proof}  

For our purposes here, we will use the following bound on the probability that  with {\em
   weak pps}  sampling ($\alpha<1$) the estimate exceeds $\alpha^{-1} V(Q 
  \mid X,\vecw)$:
  \begin{corollary} [Overestimation probability] \label{weakpps:lemma}
\begin{eqnarray*}
\text{For $\alpha\leq 0.5$,}\ \Pr[V(Q\mid S) \geq \alpha^{-1} V(Q 
  \mid X,\vecw)] \leq \min\{\frac{\alpha}{1-2\alpha},
  \exp(-(1-\alpha)\ln (1/\alpha)
  \epsilon^{-2}/2)\}  
\end{eqnarray*}
\end{corollary}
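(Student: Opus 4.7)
The plan is to derive the corollary directly from Theorem~\ref{ppsdedicated:thm} by choosing the parameter $\delta$ appropriately in the two bounds that upper-bound the probability of overestimation, and then take the minimum.

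First I would set $1+\delta = \alpha^{-1}$ in both the first (exponential) tail bound and the third (Markov-type) bound of Theorem~\ref{ppsdedicated:thm}. This gives $\delta = (1-\alpha)/\alpha$, and the event $V(Q\mid S) \geq (1+\delta) V(Q\mid X,\vecw)$ becomes exactly the event in the corollary. The assumption $\alpha \leq 1/2$ guarantees $\delta \geq 1$, so the Markov bound is applicable.

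For the first bound, substituting $\delta = (1-\alpha)/\alpha$ into $\exp(-\delta \ln(1+\delta)\,\alpha\epsilon^{-2}/2)$ yields
\[
\exp\!\left(-\tfrac{1-\alpha}{\alpha}\ln(1/\alpha)\cdot \alpha\epsilon^{-2}/2\right) = \exp\!\left(-(1-\alpha)\ln(1/\alpha)\,\epsilon^{-2}/2\right),
\]
which is the second term in the $\min$. For the third (Markov) bound, $\frac{1}{\delta-1}$ with $\delta-1 = (1-2\alpha)/\alpha$ gives $\frac{\alpha}{1-2\alpha}$, which is the first term in the $\min$. Taking the smaller of the two upper bounds finishes the proof.

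There is no real obstacle here: both bounds are already stated in Theorem~\ref{ppsdedicated:thm}, and the corollary is a direct specialization. The only small thing to verify is that the Markov bound is indeed available, which requires $\delta\geq 1$ and is exactly the condition $\alpha\leq 1/2$ appearing in the hypothesis.
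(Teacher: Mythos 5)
Your proof is correct and is essentially identical to the paper's: the paper's own proof simply says ``substitute $\delta = 1/\alpha - 1$ in the multiplicative Chernoff bounds and apply Markov inequality,'' which is exactly what you have worked out, including the observation that $\alpha \le 1/2$ is precisely what makes $\delta \ge 1$ and hence the Markov-type bound applicable.
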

\begin{proof}
   We substitute
   relative error of $\delta=(1/\alpha-1)$ in the multiplicative Chernoff bounds and by also applying Markov inequality.
\end{proof}

\subsection{Multi-objective pps}
When we seek estimates with statistical guarantees for a 
set $\calQ$ of queries (for example, all sets of $k$ points in the
  metric space $\calM$), we use multi-objective samples
  \cite{multiw:VLDB2009,multiobjective:2015}.
 The {\em multi-objective (MO) pps base sampling
  probabilities} are defined as the maximum of the pps base
probabilities over $Q\in \calQ$:
{\small
\begin{equation} \label{MObasepps:eq}
\psi_x^{(\calQ \mid X,\vecw)} = \max_{Q\in \calQ} \psi_x^{(Q \mid X,\vecw)}\ .
\end{equation}
}
Accordingly, for a size parameter $r$, the multi-objective pps
probabilities are
{\small
$$r*\psi_x^{(\calQ \mid X,\vecw)} = \min\{1,r \psi_x^{(\calQ \mid X,\vecw)}\} = \max_{Q\in\calQ} r*\psi_x^{(Q \mid X,\vecw)} \ .$$}
A key property of multi-objective pps is that the CV and
concentration bounds of dedicated (weak) pps samples (Theorem~\ref{ppsdedicated:thm} and Corollary~\ref{weakpps:lemma}) hold.
We refer to these multi-objective statistical quality guarantees as ``\eachg,'' meaning that they hold for each $Q$ over the distribution  of the samples.
We define the {\em overhead} 
of multi-objective sampling $\calQ$ or equivalently of the respective base
probabilities as:
$$h(\calQ \mid X,\vecw) \equiv |\boldpsi^{(\calQ \mid X,
\vecw)}|_1 \equiv \sum_{x\in X} \psi_x^{(\calQ \mid X,\vecw)} 
\ .$$ 
\notinproc{Note that the overhead is always between 
$1$ and $|\calQ|$.}
The overhead bounds the factor-increase in sample size due to ``multi-objectiveness:''
The multi-objective pps sample size with size parameter $r$
is at most $|r*\psi^{(Q\mid X,\vecw)}|_1\leq r h(\calQ \mid X,\vecw)$.

Sometimes we can not compute $\boldpsi$ exactly but can instead
efficiently obtain upper bounds $\boldpi \geq \boldpsi^{(\calQ \mid
  X,\vecw)}$.  Accordingly, we use sampling probabilities
$r*\boldpi$.  
The use of upper bounds increases the sample size. We refer to
$h(\boldpi) = |\boldpi|_1$ as the overhead of $\boldpi$. 
We seek upper-bounds $\boldpi$ with overhead not
much larger than $h(\calQ \mid X,\vecw)$.

\section{one2all probabilities} \label{thm:sec}

Consider a relaxed metric space $(\calM,d)$ where distances satisfy
all properties of a metric space except that the triangle inequality is relaxed using a parameter $\rho \geq 1$:
\begin{equation} \label{rhotriangle}
\forall x,y,z\in\calM,\ d_{xy} \leq \rho(d_{xz}+d_{zy})\ .
\end{equation}  

Let $(X,\vecw)$ where $X\subset \calM$ and $\vecw>0$ be weighted points in $\calM$.
For another set of points $M\subset \calM$, which we refer to as {\em
  centroids},  and $q\in M$, we denote by
   $$X^{(M)}_q = \{x\in X \mid d_{xq} = d_{xM} \}$$
 the points in $X$ that are closest to centroid  $q$. In case of ties we apply
 arbitrary tie breaking to ensure that $X^{(M)}_q$ for $q\in M$ forms a partition of $X$. 
 We will assume that $X^{(M)}_q$ is not empty for all $q\in M$, since
 otherwise, we can remove the point $q$ from $M$ without affecting the
 clustering cost of $X$ by $M$.

Our {\em one2all} construction takes one set of centroids $M$ and
computes base probabilities for $x\in X$ such that samples from it allow us 
to estimate the clustering costs of all $Q$ with estimation quality
guarantees that depends on $V(Q \mid X,\vecw)$.
For a set $M$ we define the {\em one2all base probabilities}
$\boldpi^{(M \mid X,\vecw)}$ as:
{\small
\begin{eqnarray} 
\lefteqn{\forall m\in M, \ \forall x\in
X_m,} \label{OmegaMOQk:eq}\\
 \pi^{(M|X,\vecw)}_x &=& \min\left\{1, 
\max\left\{
2\rho \frac{w_x d_{xM}}{V(M \mid X,\vecw)},
\frac{8\rho^2 w_x}{w(X_m)}
\right\}
\right\}\ .\nonumber
\end{eqnarray}
} We omit the superscripts when clear from context.

\begin{theorem} [one2all] \label{MtoMO:thm}
Consider weighted points $(X,\vecw)$ in a relaxed metric space with
parameter $\rho$,   points  $M$,  and a set $Q$ of centroids.  Then
$$\boldpi^{(M\mid X,\vecw)} \geq  \min\{1, \frac{V(Q \mid X,\vecw)}{V(M \mid X,\vecw)}\}  \boldpsi^{(Q \mid X,\vecw)}\ ,$$
where $\boldpi^{(M \mid  
  X,\vecw)}$ are the one2all base probabilities for $M$.
\end{theorem}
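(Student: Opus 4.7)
The plan is to recognize that $\min\{1,V(Q\mid X,\vecw)/V(M\mid X,\vecw)\}\cdot \psi_x^{(Q\mid X,\vecw)}$ always equals $w_x d_{xQ}/V^\star$, where $V^\star := \max\{V(Q\mid X,\vecw),V(M\mid X,\vecw)\}$ (check both cases $V(Q)\ge V(M)$ and $V(Q)<V(M)$ directly). So the theorem reduces to the single inequality $\pi_x^{(M\mid X,\vecw)}\ge w_x d_{xQ}/V^\star$ for each $x\in X$. When $\pi_x=1$ this is immediate, since $V^\star \ge V(Q)\ge w_x d_{xQ}$. So I may assume $\pi_x$ is the unclamped maximum in \eqref{OmegaMOQk:eq}.

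Next I would apply the $\rho$-relaxed triangle inequality through the centroid $m\in M$ closest to $x$:
\[
d_{xQ}\;\le\;\rho(d_{xm}+d_{mQ})\;=\;\rho\, d_{xM}+\rho\, d_{mQ}.
\]
The first term is already harmless: since $V^\star\ge V(M)$ we get
$\rho w_x d_{xM}/V^\star \le \rho w_x d_{xM}/V(M) \le \tfrac12\pi_x$
by the first term in the max defining $\pi_x$. Everything then reduces to bounding $\rho w_x d_{mQ}/V^\star$ by roughly $\tfrac12\pi_x$ using the second term in the max.

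The main obstacle, which is really the heart of one2all, is to bound $d_{mQ}$ by aggregate cluster quantities so that the per-point weight $w_x/w(X_m)$ can take over. I would do this by averaging over $y\in X_m$: for each such $y$ pick its closest $q^\star(y)\in Q$ and apply the relaxed triangle inequality to write $d_{mQ}\le d_{m q^\star(y)}\le \rho(d_{ym}+d_{yQ})$. Multiplying by $w_y$ and summing over $y\in X_m$ gives
\begin{equation*}
w(X_m)\, d_{mQ}\;\le\;\rho \sum_{y\in X_m} w_y d_{ym}+\rho \sum_{y\in X_m} w_y d_{yQ}\;\le\;\rho V(M\mid X,\vecw)+\rho V(Q\mid X,\vecw)\;\le\;2\rho V^\star,
\end{equation*}
so $d_{mQ}\le 2\rho V^\star/w(X_m)$.

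Plugging this in yields $\rho w_x d_{mQ}/V^\star \le 2\rho^2 w_x/w(X_m)\le \tfrac14 \pi_x$ from the second term $8\rho^2 w_x/w(X_m)$ in the max. Combining, $w_x d_{xQ}/V^\star \le \tfrac12\pi_x+\tfrac14\pi_x=\tfrac34\pi_x\le \pi_x$, which is what we wanted. The specific constants $2\rho$ and $8\rho^2$ in the definition of $\pi_x$ are tuned exactly so that the cluster-radius term pays for the first piece of the triangle split and the cluster-mass term pays for the $d_{mQ}$ piece, each with room to spare, and no other assumption on $Q$ beyond its clustering cost enters anywhere in the argument.
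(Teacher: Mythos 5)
Your proof is correct, and it takes a genuinely different and cleaner route than the paper's. The paper proves the theorem by a three-way case analysis, with the key auxiliary object being the weighted median $\Delta_m$ of distances $d_{mx}$ over $x\in X_m$: it first splits on whether $d_{xQ}\le 2\rho\, d_{xM}$ (handled by the radius term of $\pi_x$), then within the complementary case splits further on whether $d_{mQ}\ge 2\rho\Delta_m$ or $d_{mQ}\le 2\rho\Delta_m$ (handled by the cluster-mass term in two different ways, via \eqref{defmeds:eq} and via \eqref{allmeds:eq} respectively). You instead avoid medians and case-splitting entirely: you write a single additive split $d_{xQ}\le \rho\, d_{xM}+\rho\, d_{mQ}$ and charge the two pieces to the two terms of the max, with the one nontrivial step being your averaging bound
$w(X_m)\,d_{mQ}\le \rho\bigl(V(M\mid X_m,\vecw)+V(Q\mid X_m,\vecw)\bigr)\le 2\rho V^\star$,
obtained by averaging the relaxed triangle inequality $d_{mQ}\le \rho(d_{ym}+d_{yQ})$ over $y\in X_m$ weighted by $w_y$. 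That move plays the role of both of the paper's Lemmas \ref{condB:lemma} and \ref{condC:lemma} at once. Your argument is shorter, avoids the median machinery, and in fact shows the constant $8\rho^2$ has slack (your bound totals $\tfrac34\pi_x$, so $4\rho^2$ would already suffice for the inequality to hold with these constants in the radius term); the paper's route, while longer, isolates the geometric regimes more explicitly, which may be why they chose it for exposition. Two small points worth spelling out in a final write-up: (i) the base case $\pi_x=1$ uses $V^\star\ge V(Q)\ge w_x d_{xQ}$, which you stated; (ii) the step $\sum_{y\in X_m}w_y d_{ym}\le V(M)$ uses that for $y\in X_m$ one has $d_{ym}=d_{yM}$ by definition of $X_m$, and the step $\sum_{y\in X_m}w_y d_{yQ}\le V(Q)$ just drops the other clusters — both are immediate, but worth a clause each.
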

The full proof of the Theorem is provided in the next section.
As a corollary, we obtain that for $r\geq 1$, we can upper bound the
multi-objective base pps probabilities $\boldpsi^{(\calQ \mid
  X,\vecw)}$ and the overhead $h(\calQ)$
of the set $\calQ$ of all $Q$ with at least a fraction $1/r$ of the clustering cost of $M$:
\begin{corollary} \label{one2allplus:coro}
  Consider $M$ and $r \geq 1$ and the set
   $\calQ = \{Q \mid V(Q \mid X,\vecw) \geq V(M \mid X,\vecw)/r\}$. Then, 
  $r*\boldpi^{(M\mid X,\vecw)} \geq \boldpsi^{(\calQ \mid X,\vecw)}$
  and $h(\calQ) \leq r (8\rho^2 |M| + 2\rho)$.
\end{corollary}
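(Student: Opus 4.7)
The plan is to derive both claims directly from Theorem~\ref{MtoMO:thm}, together with a short bookkeeping argument applied to the defining expression for $\pi_x^{(M\mid X,\vecw)}$. The first inequality is essentially pointwise; the overhead bound is a two-term sum estimate.

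For the first claim, I fix an arbitrary $Q \in \calQ$. By definition of $\calQ$ we have $V(Q\mid X,\vecw) \geq V(M\mid X,\vecw)/r$, and since $r \geq 1$ this gives $\min\{1, V(Q\mid X,\vecw)/V(M\mid X,\vecw)\} \geq 1/r$. Plugging this lower bound into Theorem~\ref{MtoMO:thm} yields coordinatewise $\boldpi^{(M\mid X,\vecw)} \geq (1/r)\,\boldpsi^{(Q\mid X,\vecw)}$, equivalently $r\,\boldpi^{(M\mid X,\vecw)} \geq \boldpsi^{(Q\mid X,\vecw)}$. Because base pps probabilities always lie in $[0,1]$, capping both sides at $1$ preserves the inequality: $r*\boldpi^{(M\mid X,\vecw)} = \min\{1, r\boldpi^{(M\mid X,\vecw)}\} \geq \min\{1, \boldpsi^{(Q\mid X,\vecw)}\} = \boldpsi^{(Q\mid X,\vecw)}$. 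Taking the coordinatewise maximum over $Q \in \calQ$ and recalling the definition~\eqref{MObasepps:eq} of $\boldpsi^{(\calQ\mid X,\vecw)}$ produces the first assertion.

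For the overhead, I first observe that
\[
h(\calQ) = |\boldpsi^{(\calQ\mid X,\vecw)}|_1 \leq |r*\boldpi^{(M\mid X,\vecw)}|_1 \leq r\,|\boldpi^{(M\mid X,\vecw)}|_1,
\]
using the first part and $\min\{1,r\pi_x\} \leq r\pi_x$. It therefore suffices to show $|\boldpi^{(M\mid X,\vecw)}|_1 \leq 8\rho^2|M| + 2\rho$. Dropping the outer $\min\{1,\cdot\}$ and bounding $\max\{a,b\} \leq a+b$ on the inner expression of~\eqref{OmegaMOQk:eq}, for each $m \in M$ and each $x \in X_m$,
\[
\pi_x^{(M\mid X,\vecw)} \leq 2\rho\,\frac{w_x d_{xM}}{V(M\mid X,\vecw)} + \frac{8\rho^2 w_x}{w(X_m)}.
\]
Summing the first term over all $x \in X$ contributes $2\rho\cdot V(M\mid X,\vecw)/V(M\mid X,\vecw) = 2\rho$. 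For the second term, summing over $x \in X_m$ gives exactly $8\rho^2$, and since the clusters $\{X_m\}_{m \in M}$ partition $X$ we pick up an additional factor of $|M|$, yielding $8\rho^2|M|$ in total.

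The only real subtlety -- and the closest thing to an obstacle -- is making sure the two caps at $1$ are handled consistently: once to verify that $r*\boldpi^{(M\mid X,\vecw)}$ still dominates each $\boldpsi^{(Q\mid X,\vecw)}$ after the $\min\{1,\cdot\}$ is applied, and once to observe that the same cap can only help when bounding the $\ell_1$ norm from above. Both points follow immediately from the fact that all base pps probabilities lie in $[0,1]$, so no further work is required beyond the calculation above.
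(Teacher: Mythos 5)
Your proof is correct and takes essentially the same route as the paper's: invoke Theorem~\ref{MtoMO:thm} with the defining bound $V(Q\mid X,\vecw) \geq V(M\mid X,\vecw)/r$ to get the coordinatewise inequality, and then observe that $|\boldpi^{(M\mid X,\vecw)}|_1 \leq 8\rho^2|M| + 2\rho$. The paper merely asserts this $\ell_1$ bound as a "note," whereas you supply the two-term $\max\{a,b\}\leq a+b$ estimate and the per-cluster summation that establishes it — a welcome expansion, not a different argument.
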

\begin{proof}
For $Q\in\calQ$,   $r*\boldpi^{(M\mid X,\vecw)} \geq  r \min\{1, \frac{V(Q \mid X,\vecw)}{V(M \mid X,\vecw)}\}*\boldpsi^{(Q \mid X,\vecw)}\geq \boldpsi^{(Q \mid X,\vecw)}$.
  Note that $|\boldpi^{(M|X,\vecw)}|_1 \leq 8\rho^2 |M| + 2\rho$.
\end{proof}




We can also upper bound the multi-objective overhead of
all sets of centroids of size $k$:
\begin{corollary}
For $k\geq 1$, let $\calQ$ be the set of all k-subsets of points
in a relaxed metric space $\calM$ with parameter $\rho$.  The multi-objective  pps overhead of
$\calQ$ satisfies
$$h(\calQ) \leq 8\rho^2 k + 2\rho\ .$$
\end{corollary}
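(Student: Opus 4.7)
The plan is to apply Corollary~\ref{one2allplus:coro} with a carefully chosen reference set $M$, namely one whose clustering cost is (essentially) the infimum over all $k$-subsets. Since the corollary gives an overhead bound of $r(8\rho^2 |M| + 2\rho)$ for the set of queries with $V(Q) \geq V(M)/r$, taking $|M|=k$ and $r\to 1$ yields precisely the desired bound.

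First, I would set $v^* = \inf_{Q \subset \calM,\, |Q|\leq k} V(Q \mid X, \vecw)$ and, for any $\epsilon>0$, pick a set $M_\epsilon \subset \calM$ with $|M_\epsilon| \leq k$ and $V(M_\epsilon \mid X,\vecw) \leq (1+\epsilon) v^*$. (Without loss of generality we can pad $M_\epsilon$ with arbitrary points so that $|M_\epsilon| = k$; adding centroids only decreases the cost, so the inequality is preserved.) By construction, every $Q \in \calQ$ satisfies
$$V(Q \mid X,\vecw) \geq v^* \geq \frac{V(M_\epsilon \mid X,\vecw)}{1+\epsilon}.$$

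Second, I would invoke Corollary~\ref{one2allplus:coro} with $M = M_\epsilon$ and $r = 1+\epsilon$. The corollary's hypothesis is met since $\calQ$ is contained in $\{Q \mid V(Q \mid X,\vecw) \geq V(M_\epsilon)/(1+\epsilon)\}$, hence
$$h(\calQ) \leq (1+\epsilon)\bigl(8\rho^2 k + 2\rho\bigr).$$
Since this holds for every $\epsilon > 0$, letting $\epsilon \to 0$ gives the claimed bound $h(\calQ) \leq 8\rho^2 k + 2\rho$.

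There is essentially no obstacle here beyond handling the (minor) subtlety that the infimum of $V(Q)$ over $k$-subsets of the ambient space $\calM$ need not be attained (e.g., when $\calM$ is not closed or when ties in cluster membership cause discontinuities). The $\epsilon$-approximation argument above sidesteps this cleanly. If one prefers, in settings where the minimum is attained (such as $k$-means in $\mathbb{R}^d$ with squared Euclidean distances) the argument simplifies to a direct application of Corollary~\ref{one2allplus:coro} with $r = 1$ and $M$ an optimal $k$-clustering.
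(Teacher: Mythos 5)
Your proof is correct and follows essentially the same route as the paper, which simply applies Corollary~\ref{one2allplus:coro} with $M$ an optimal $k$-clustering and $r=1$. The only addition is your $\epsilon$-approximation argument to handle the case where the infimum over $k$-subsets is not attained, a minor technical subtlety the paper does not address explicitly.
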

\begin{proof}
We apply Corollary~\ref{one2allplus:coro} with $M$ being the $k$-means optimum and $r=1$.
\end{proof}

\section{Proof of the one2all Theorem} \label{proof:sec}
Consider a set of points $Q$ and let 
$\alpha = \max\{1,  \frac{V(M \mid X,\vecw)}{V(Q \mid X,\vecw)} \}\ .$
To prove Theorem~\ref{MtoMO:thm},     we need to show that 
$\forall x\in X,\ $
\begin{equation} \label{claimM2}
\psi_x^{(Q\mid X,\vecw)} = \frac{w_x d_{xQ}}{V(Q\mid 
   X,\vecw)} \leq  \alpha \pi^{(M|X,\vecw)}_x\ . 
\end{equation}
We will do a case analysis, as illustrated in Figure~\ref{proof:fig}.
We first consider points $x$ such that
the distance of $x$ to $Q$ is not much larger than the distance
of $x$ to $M$.
Property \eqref{claimM2} follows using the first term
of the maximum in \eqref{OmegaMOQk:eq}.
\begin{lemma} \label{closer2Q:lemma}
Let $x$ be such that $d_{xQ} \leq 2\rho d_{xM}$.  
Then
$$\frac{w_x d_{xQ}}{V(Q \mid X,\vecw)}  \leq  2\rho \alpha  \frac{w_x d_{xM}}{V(M \mid X,\vecw)}
\ .$$
\end{lemma}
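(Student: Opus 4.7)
The plan is to reduce the claimed inequality to a simple bound relating $V(Q)$ and $V(M)$, then verify that bound directly from the definition of $\alpha$. Since the hypothesis $d_{xQ}\le 2\rho\, d_{xM}$ already bounds the numerator on the left by (up to a factor $2\rho$) the numerator on the right, only the denominators need to be reconciled.

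First I would rewrite the left-hand side using the hypothesis:
\[
\frac{w_x d_{xQ}}{V(Q\mid X,\vecw)} \;\le\; \frac{2\rho\, w_x d_{xM}}{V(Q\mid X,\vecw)}.
\]
Comparing with the desired right-hand side $2\rho\alpha\, w_x d_{xM}/V(M\mid X,\vecw)$, the inequality reduces to showing
\[
\frac{1}{V(Q\mid X,\vecw)} \;\le\; \frac{\alpha}{V(M\mid X,\vecw)},
\qquad\text{i.e.,}\qquad V(M\mid X,\vecw) \;\le\; \alpha\, V(Q\mid X,\vecw).
\]

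Next I would verify this last inequality by a two-case split on the definition $\alpha=\max\{1, V(M\mid X,\vecw)/V(Q\mid X,\vecw)\}$. In the case $V(M)\ge V(Q)$ we have $\alpha = V(M)/V(Q)$, so $\alpha V(Q) = V(M)$ and the inequality holds with equality. In the case $V(M) < V(Q)$ we have $\alpha=1$ and the inequality $V(M) \le V(Q)$ holds by assumption. Combining these with the chain above yields the lemma.

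The only ``obstacle'' is essentially bookkeeping: making sure $\alpha$ is used in the right direction (it is defined with $V(M)$ in the numerator precisely so that $\alpha V(Q)\ge V(M)$), and that $w_x$ appears identically on both sides so no weight-dependent manipulation is needed. This lemma handles the easy case of the case analysis sketched in Figure~\ref{proof:fig}; the remaining work in the proof of Theorem~\ref{MtoMO:thm} will be the complementary regime $d_{xQ} > 2\rho\, d_{xM}$, where the second term of the maximum in \eqref{OmegaMOQk:eq} is the one that dominates and the relaxed triangle inequality must be invoked in a non-trivial way.
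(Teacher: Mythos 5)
Your proposal is correct and takes essentially the same route as the paper's proof: applying the hypothesis $d_{xQ}\le 2\rho\,d_{xM}$ to the numerator and the bound $V(Q\mid X,\vecw)\ge V(M\mid X,\vecw)/\alpha$ (which you verify via a case split on the definition of $\alpha$, while the paper just invokes it directly) to the denominator. The two steps are applied in the opposite order, but that is immaterial.
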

\begin{proof}
Using $V(Q \mid X,\vecw)  \geq  V(M \mid X,\vecw)/\alpha$ we get
$$\frac{d_{xQ}}{V(Q \mid X,\vecw)} \leq  \alpha \frac{d_{xQ}}{V(M
  \mid X,\vecw)} \leq 2\rho \alpha  \frac{d_{xM}}{V(M
\mid X,\vecw)} \ .$$
\end{proof}

\begin{figure}
\center 
\includegraphics[width=0.45\textwidth]{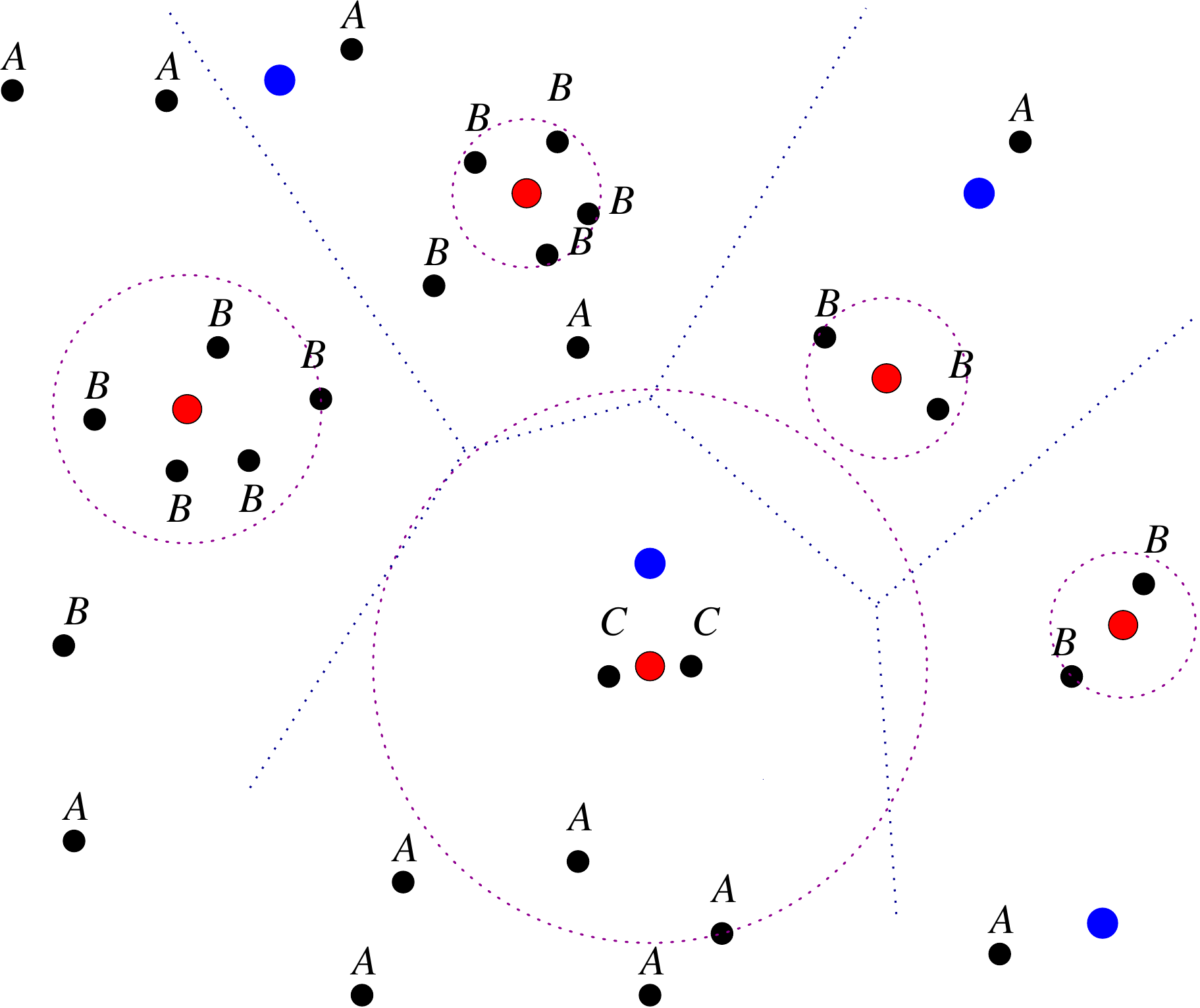}
\caption{Illustration of the one2all construction proof with $\rho=1$.  The data points $X$ are in 
    black.  The points in $M$ are colored red.  We show the respective
    Voronoi partition and for each cluster, we show circles centered at the respective 
    $m\in M$ (red) point with radius $\Delta_m$.  The points in blue are a 
    set $Q$.   The points $x\in X$ are labeled $A$ if $d_{xQ}< 2 
    d_{xM}$ (and we apply Lemma~\ref{closer2Q:lemma}).  Otherwise, when there is 
    a point $m$ such that $d_{xQ}> d_{xm}$, the point is labeled $B$
    when $d_{mQ} \geq 2\Delta_m$ (Lemma~\ref{condB:lemma}) and is 
    labeled $C$ otherwise (Lemma~\ref{condC:lemma}).}
\label{proof:fig}
\end{figure}

It remains to consider the complementary case where point $x$
 is much closer to $M$ than to $Q$:  
\begin{equation} \label{caseBC}
d_{xQ} \geq 2\rho d_{xM}\ .
\end{equation}
We first introduce a useful definition:  For a 
point $q\in M$,  we denote 
by $\Delta_q$ the weighted median of the distances $d_{qy}$ for $y\in
X_q$, weighted by $w_y$.  The median $\Delta_q$ is a value that 
satisfies the following two conditions: 
\begin{eqnarray}
\sum_{x\in X_q \mid d_{xq}\leq \Delta_q} w_x &
  \geq& \frac{1}{2} w(X_q)  \label{defmeds:eq}\\
  \sum_{x\in X_q  \mid d_{xq}\geq \Delta_q} w_x &\geq&
                                                      \frac{1}{2}  w(X_m) \label{defmedl:eq} \ . 
\end{eqnarray}  
It follows from \eqref{defmedl:eq} that for all $q\in M$, 
{\small
\begin{eqnarray*}
V(M \mid X_q,\vecw) &=&  \sum_{x\in X_q} w_x d_{qx} \geq \sum_{x\in X_q
  \mid d_{xq}\geq \Delta_q} w_x d_{xq} \\
&&\geq \Delta_q \sum_{x\in X_q
  \mid d_{xq}\geq \Delta_q} w_x \geq \frac{1}{2} w(X_m)\Delta_q\ .
\end{eqnarray*}
}
Therefore, 
{\small
\begin{equation} \label{allmeds:eq}
V(M \mid X,\vecw) = \sum_{q\in M}  V(M \mid X_q,\vecw) \geq \frac{1}{2} \sum_{q\in M}
w(X_m)\Delta_q\ . 
\end{equation}
}

We now return to our proof for $x$ that satisfies \eqref{caseBC}.  We will show that property \eqref{claimM2} holds using the second term
in the $\max$ operation in the definition \eqref{OmegaMOQk:eq}.  
Specifically, let $m$ be the closest $M$  point to $x$.  We will show
that 
\begin{equation} \label{lclaim}
\frac{d_{xQ}}{V(Q \mid X,\vecw)} \leq 8\rho^2 \alpha \frac{1}{w(X_m)}\ .
\end{equation}

We divide the proof to two subcases, in the two following Lemmas, each covering the complement of the
other:  When $d_{mQ} \geq 2\rho \Delta_m$ and when $d_{mQ} \leq 2 \rho
\Delta_m$.

\begin{lemma} \label{condB:lemma}
Let $x$ be such that 
\begin{equation*}
\exists m\in M,\  d_{mx} < \frac{1}{2\rho} d_{xQ} \text{ and } d_{mQ} \geq 2\rho \Delta_m \ .
\end{equation*}
Then 
$$\frac{d_{xQ}}{V(Q \mid X,\vecw)} \leq \frac{8\rho^2}{w(X_m)}\ .$$
\end{lemma}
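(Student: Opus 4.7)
The plan is to lower bound $V(Q \mid X,\vecw)$ by the contribution of those points in the cluster $X_m$ that lie within $\Delta_m$ of $m$, using the fact that the hypothesis $d_{mQ}\geq 2\rho\Delta_m$ forces every such point to be far from $Q$ as well. Then I will upper bound $d_{xQ}$ by $d_{mQ}$ via the relaxed triangle inequality, and the two bounds combine cleanly into the desired inequality.

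First, I would convert the hypothesis $d_{mx} < \frac{1}{2\rho}d_{xQ}$ into a bound of the form $d_{xQ} \leq 2\rho d_{mQ}$. By \eqref{rhotriangle}, $d_{xQ} \leq \rho(d_{xm}+d_{mQ}) < \tfrac{1}{2}d_{xQ} + \rho d_{mQ}$, and rearranging gives the stated inequality. This is the easy half and uses only the first assumption.

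Next I would lower bound $V(Q\mid X,\vecw)$ by restricting the sum to the set $Y := \{y\in X_m : d_{my} \leq \Delta_m\}$, which, by the defining property \eqref{defmeds:eq} of $\Delta_m$, carries total weight at least $\tfrac12 w(X_m)$. For every $y\in Y$, the relaxed triangle inequality gives $d_{mQ} \leq \rho(d_{my}+d_{yQ}) \leq \rho \Delta_m + \rho d_{yQ}$, and invoking the second hypothesis $d_{mQ}\geq 2\rho\Delta_m$ (so that $\rho\Delta_m \leq \tfrac{1}{2}d_{mQ}$) yields $d_{yQ} \geq d_{mQ}/(2\rho)$. Summing,
\begin{equation*}
V(Q\mid X,\vecw) \;\geq\; \sum_{y\in Y} w_y\, d_{yQ} \;\geq\; \frac{w(X_m)}{2}\cdot\frac{d_{mQ}}{2\rho} \;=\; \frac{w(X_m)\,d_{mQ}}{4\rho}\ .
\end{equation*}

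Finally, combining the two bounds yields
\begin{equation*}
\frac{d_{xQ}}{V(Q\mid X,\vecw)} \;\leq\; \frac{2\rho\, d_{mQ}}{w(X_m)\,d_{mQ}/(4\rho)} \;=\; \frac{8\rho^2}{w(X_m)}\ ,
\end{equation*}
as required. The only delicate point, and the step I would want to write out carefully, is the calibration in which the factor $2\rho$ appearing in both hypotheses is exactly what is needed: the hypothesis $d_{mx}<\tfrac{1}{2\rho}d_{xQ}$ loses a factor $2\rho$ when going from $d_{mQ}$ to $d_{xQ}$, while $d_{mQ}\geq 2\rho\Delta_m$ loses a factor $2\rho$ when going from $d_{mQ}$ to $d_{yQ}$ for the median-close points; these two $2\rho$'s together with the factor $2$ from the weighted-median inequality account precisely for the $8\rho^2$ in the conclusion.
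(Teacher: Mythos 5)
Your proof is correct and follows essentially the same approach as the paper's: upper bound $d_{xQ}$ by $2\rho d_{mQ}$ via the relaxed triangle inequality and the first hypothesis, then lower bound $V(Q)$ by $\tfrac{1}{4\rho}w(X_m)d_{mQ}$ by showing every point within $\Delta_m$ of $m$ is at distance at least $d_{mQ}/(2\rho)$ from $Q$ and invoking the weighted-median property. The only cosmetic difference is that you apply the triangle inequality directly as $d_{mQ}\leq\rho(d_{my}+d_{yQ})$, whereas the paper routes it through the nearest point $q'\in Q$ to $y$ and rearranges, but this is the same estimate.
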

\begin{proof}
Let $q = \arg\min_{z\in Q} d_{mz}$ be the closest $Q$ point to $m$.
From (relaxed) triangle inequality \eqref{rhotriangle} and our assumptions:
$$d_{xQ} \leq d_{xq} \leq \rho(d_{mq} + d_{mx}) = \rho (d_{mQ}
+d_{mx}) \leq \rho d_{mQ}+ \frac{1}{2} d_{xQ}\ .$$ 
Rearranging, we get
\begin{equation} \label{c2p1}
d_{xQ} \leq 2\rho d_{mQ}\ .
\end{equation}
Consider a point $y$ such that $d_{my} \leq \Delta_m$.  Let $q' =
\arg\min_{z\in Q} d_{yz}$ be the closest $Q$ point to $y$.
From relaxed triangle inequality we have  $d_{mq'} \leq \rho(d_{yq'} +
d_{ym})$ and therefore
{\small
\begin{eqnarray*}
d_{yQ} &=& d_{yq'} \geq \frac{1}{\rho} d_{mq'}-d_{ym} \geq \frac{1}{\rho} d_{mQ}-
\Delta_m\\ &\geq& \frac{1}{\rho} d_{mQ}-  \frac{1}{2\rho} d_{mQ}  \geq
  \frac{1}{2\rho}  d_{mQ}\ .
\end{eqnarray*}
}
Thus, using the definition of $\Delta_m$ \eqref{defmeds:eq}:
{\small
\begin{eqnarray} 
V(Q \mid X,\vecw) &\geq& \sum_{y\mid d_{yQ}\leq \Delta_m} w_y d_{yQ} \geq
\frac{1}{2 \rho} \sum_{y\mid d_{yQ}\leq \Delta_m} w_y d_{m Q}  \nonumber\\
&\geq& 
\frac{1}{2 \rho} d_{mQ} \sum_{y\in X_m \mid d_{yQ}\leq \Delta_m} w_y  \nonumber\\
 &\geq& \frac{1}{2\rho} d_{mQ}
\frac{w(X_m)}{2} = \frac{1}{4\rho} d_{mQ} w(X_m) \ . \label{c2p2}
\end{eqnarray}
}
 Combining \eqref{c2p1} and \eqref{c2p2} we obtain:
$$\frac{d_{xQ}}{V(Q \mid X,\vecw)} \leq \frac{2\rho d_{mQ}}{\frac{1}{4\rho}w(X_m) d_{mQ}}
= 8\rho^2 \frac{1}{w(X_m)} \ .$$

\end{proof}

\begin{lemma}  \label{condC:lemma}
Let a point $x$ be such that 
\begin{equation*}
 \exists m\in M,\ 
d_{xm} < \frac{1}{2\rho} d_{xQ} \text{ and }  d_{mQ} \leq 2\rho\Delta_m\ .
\end{equation*}
 Then 
$$\frac{d_{xQ}}{V(Q \mid X,\vecw)} \leq  8\rho^2\alpha \frac{1}{w(X_m)}\ .$$
\end{lemma}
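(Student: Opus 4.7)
The plan is to chain together three simple bounds and finish by comparing $d_{xQ}$ against a per-cluster lower bound on $V(M\mid X,\vecw)$.

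First, exactly as in the opening step of Lemma~\ref{condB:lemma}, I would apply the relaxed triangle inequality \eqref{rhotriangle} to $x,q,m$ where $q=\arg\min_{z\in Q}d_{mz}$ and use the hypothesis $d_{xm}<\frac{1}{2\rho}d_{xQ}$ to deduce
\[
d_{xQ}\leq \rho(d_{xm}+d_{mQ})\leq \tfrac{1}{2}d_{xQ}+\rho\,d_{mQ},
\]
and hence $d_{xQ}\leq 2\rho\,d_{mQ}$. This is where the new hypothesis $d_{mQ}\leq 2\rho\Delta_m$ enters: it upgrades this bound to $d_{xQ}\leq 4\rho^2\Delta_m$.

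Second, I would lower bound $V(Q\mid X,\vecw)$ by $V(M\mid X,\vecw)/\alpha$ (which is just the definition of $\alpha$) and then use the per-cluster half of the computation that leads to \eqref{allmeds:eq}: restricting the sum $\sum_{x\in X_m}w_x d_{xm}$ to points with $d_{xm}\geq\Delta_m$ and applying \eqref{defmedl:eq} gives
\[
V(M\mid X,\vecw)\geq V(M\mid X_m,\vecw)\geq \tfrac{1}{2}w(X_m)\Delta_m.
\]
Combining,
\[
V(Q\mid X,\vecw)\geq \tfrac{1}{2\alpha}\,w(X_m)\,\Delta_m.
\]

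Finally, dividing the two estimates yields
\[
\frac{d_{xQ}}{V(Q\mid X,\vecw)}\leq \frac{4\rho^2\Delta_m}{\tfrac{1}{2\alpha}w(X_m)\Delta_m}=\frac{8\rho^2\alpha}{w(X_m)},
\]
which is precisely the stated bound. None of these steps are real obstacles: the whole argument is a direct combination of the relaxed triangle inequality, the new hypothesis $d_{mQ}\leq 2\rho\Delta_m$, the definition of the weighted median $\Delta_m$, and the definition of $\alpha$. The only thing to be careful about is that the lower bound $V(M\mid X,\vecw)\geq \tfrac12 w(X_m)\Delta_m$ must be read off \emph{before} summing over $q\in M$ in \eqref{allmeds:eq}, since here we only need the contribution of the single cluster containing $x$.
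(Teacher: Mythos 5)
Your proof is correct and follows essentially the same route as the paper's: the same chain $d_{xQ}\leq \rho(d_{xm}+d_{mQ})\leq \tfrac12 d_{xQ}+2\rho^2\Delta_m$ giving $d_{xQ}\leq 4\rho^2\Delta_m$, the same per-cluster lower bound $V(M\mid X,\vecw)\geq\tfrac12 w(X_m)\Delta_m$ combined with the definition of $\alpha$, and the same final division. Your closing remark about reading off the single-cluster contribution before summing in \eqref{allmeds:eq} is exactly what the paper does when it drops all terms but $q=m$.
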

\begin{proof}
Let $q = \arg\min_{z\in Q} d_{zm}$ be the closest $Q$ point to $m$.  We have 
$$d_{xQ} \leq d_{xq} \leq \rho(d_{xm}+ d_{mq}) \leq \frac{1}{2} d_{xQ} + \rho d_{mQ}
\leq \frac{1}{2} d_{xQ} + 2 \rho^2 \Delta_m$$
Therefore,
\begin{equation} \label{c3p1}
d_{xQ} \leq 4 \rho^2 \Delta_m\ .
\end{equation}
Using \eqref{allmeds:eq} we obtain
\begin{eqnarray} 
V(Q \mid X,\vecw) &\geq&    V(M \mid X,\vecw)/\alpha \geq  \frac{1}{2\alpha}
\sum_{y\in M} w(X_y) \Delta_y \nonumber \\
&\geq&    \frac{1}{2\alpha} w(X_m) \Delta_m\ . \label{c3p2}
\end{eqnarray}

  Combining \eqref{c3p1} and \eqref{c3p2} we obtain
$$\frac{d_{xQ}}{V(Q \mid X,\vecw)} \leq \frac{4\rho^2 \Delta_m}{\frac{1}{2\alpha}
   w(X_m) \Delta_m} \leq 8\rho^2\alpha
\frac{1}{w(X_m)}\ .$$
\end{proof}

\begin{table*}[!ht]
{\tiny 
\begin{tabular}{rrr r ||  l l  r ||l cc || r}
{\color{blue} $n$} &{\color{blue} $d$} & {\color{blue}$k$} & guarantee {\color{blue} $\epsilon$}  & adaptive 
                                                        {\color{blue}
                                                        $\frac{|S|}{n}$}
  & worst-case {\color{blue} $\frac{|S|}{n}$} & {\color{blue} $\times$ 
  gain} & {\tiny {\color{blue} est. err}}
 &    {\scriptsize {\color{blue} $\frac{V(Q \mid   X)}{V_{\text{ground-truth}}}$}}
  &   {\tiny {\color{blue} $\frac{V(\{m_0,\ldots,m_k\}  \mid
    X)}{V_{\text{ground-truth}}}$}} & sweet-spot
  \\
\hline
\multicolumn{10}{c}{Mixture of Gaussians data sets}\\
\hline 
$5\times10^5$ & $10$ & $5$ &   $ 0.10$ & $0.0500$ & $1.00$ &
                                                              {\color{red}
                                                              20.0} &
                                                                      $0.008$
 &  $1.07$ & $2.50$ & $2.3$ \\
$5\times10^5$ & $10$ & $5$  &   $0.20$ & $0.0136$ & $1.00$ &{\color{red} 73.3} &  $0.012$ &   $1.10$  & $2.39$ & $2.6$ \\

$2.5 \times 10^6$ & $10$ & $5$  &   $0.20$ & $0.0025$ & $1.00$ &
                                                                   {\color{red} 90.2} &  $0.0160$ &  $1.14$ & $2.14$ & $2.2$ \\
$1\times 10^7$ & $10$ & $5$  &   $0.20$ & $0.00066$ & $1.00$ &
                                                                  {\color{red}
                                                                  94.4}
        & $0.018$ &  $1.12$ & $2.07$ & $2.6$ \\

\hline
$2 \times 10^6$  & $10$ & $20$ &   $0.10$  &  $0.04839$ & $1.00$ &
                                                                     {\color{red} 20.7} &  $0.0018$ &
                                                                  $1.14$ 
  & $2.27$ & $9.1$ \\
$2 \times 10^6$  & $10$ & $20$  &   $0.20$ & $0.012007$ & $1.00$ &
                                                             {\color{red} 83.2} &  $0.008$ &
                                                                     $1.18$
  & $2.25$ & $9.0$ \\
$2 \times 10^6$  & $10$ & $50$ &   $0.20$ & $0.0298$ & $1.00$ &
                                                                  {\color{red} 33.5} &
                                                                $0.0057$
 &  $1.16$ & $2.24$ & $19.5$ \\
        $2\times 10^6$ &$10$& $100$ &   $0.20$ & $0.061918$ & $1.00$ &
                                                                       {\color{red} $16.2$} &  $0.0058$ &  $1.15$ & $2.22$ & $40.8$ \\
\hline

$1\times 10^6$ & $20$ & $10$ &   $0.10$ & $0.05293$ & $1.00$ & {\color{red} $18.9$} &  $0.0035$ &
                                                                   $1.17$
  & $2.39$ & $5.0$ \\

$1\times 10^6$  &$50$& $10$ &   $0.10$ & $0.04726$  & $1.00$ &
                                                               {\color{red}  $21.2$} & $ 0.0037$ &
                                                                  $1.19$
                                                                                                        &
                                                                                                          $2.65$ & $3.9$ \\

$1\times 10^6$ & $100$ & $10$ &   $0.10$ & $0.05287$ & $1.00$ &
                                                                {\color{red} $18.9$} &  $0.0035$ &
                                                                   $1.18$
  & $2.6$ & $4.9$ \\
\hline                                                                  
\multicolumn{10}{c}{MNIST data set}\\
\hline 
$6\times 10^5$  & $784$ & $10$  &   $0.20$  & $0.0371$ & $1.00$ &
                                                                    {\color{red}
                                                                    26.9}
        &  $0.018$ &  $0.985$ &  $1.765$ & $1.0$ \\
        \hline                                                                  
\multicolumn{10}{c}{Fashion data set}\\
\hline
$6\times 10^5$ & $784$ & $10$ &   $0.20$ & $0.05720$ & $1.00$ & {\color{red} 17.5}  &  $0.021$ &  $0.91$ & $1.65$ & $1.0$
\end{tabular}\caption{Clustering over a sample.  The reported
  estimation error 
  is {\small $\sqrt{A[\left(\frac{V(Q\mid X) -V(Q\mid S)}{V(Q\mid
        X)}\right)^2]}$} \label{results:tab}.}
} 
\end{table*}

\begin{algorithm}[h]\caption{Clustering cost oracle \label{oracle:alg}}
\DontPrintSemicolon 
    {\small
      \tcp{{\bf Preprocessing}}
   \KwIn{points $X$, weights $\vecw>\boldsymbol{0}$, iteration limit $\ell$, $C>0$,  $\epsilon>0$}
   \KwOut{Sample $S$ with weights $\vecw'_x$ for $x\in S$}
$s \gets |X|$; $M\gets \perp$ \tcp*{Initialization}
\tcp{Apply  {\sc kmeans++} to $(X, \vecw)$ and compute $\vecp$}
\ForEach{iteration $i\in[\ell]$ of {\sc kmeans++}$(X,\vecw)$}
{
$m_i \gets $ new centroid selected ; $M\gets M \cup \{m_i\}$\;\tcp*{Centroids}
$v_i \gets V(M \mid X,\vecw)$\tcp*{Clustering cost}
\ForEach(\tcp*[h]{one2all probabilities \eqref{OmegaMOQk:eq}}){$x\in X$}{$p_x' \gets \min\{1, \max\{1,\frac{v_i}{C}\}\epsilon^{-2} \pi_x^{(M\mid  X,\vecw)}\}$}
\lIf(\tcp*[h]{best so far}){$|\vecp'|_1 < s$}{$\vecp \gets \vecp'$}
}
 \tcp{Compute sample from $\vecp$}
 $S  \gets $ include each $x\in X$ with probability  $p_x$
\; \tcp*{Poisson or varopt sample}\;
\lForEach(\tcp*[h]{inverse probability weights}){$x\in S$}{$w'_x \gets  w_x/p_x$}
\Return $(S,\vecw)$ \tcp*{weighted sample}

\tcp{{\bf Oracle}}
\KwIn{$Q$ such that $|Q|=k$; weighted sample $(S,w'_x)$}
\KwOut{Estimate of $V(Q \mid X,\vecw)$}
\Return{$V(Q \mid S,\vecw')$}

}
 \end{algorithm}

\begin{algorithm}[h]\caption{Oracle with feedback for clustering cost \label{feedbackoracle:alg}}
\DontPrintSemicolon 
    {\small
      \tcp{{\bf Initialization}}
 \Indp
  
 \KwIn{Points $X$, weights $\vecw>\boldsymbol{0}$, $k$ , $\epsilon>0$}
 $\vecp, C \gets $ Probabilities $\vecp$ and cost $V(M_{2k})$ computed by Algorithm~\ref{oracle:alg} with $\ell=2k$, $C=V(M_{2k})$\;\\
\lForEach(\tcp*[h]{Randomization for sampling}){$x\in X$}{$u_x \sim  U[0,1]$}
$S \gets \{ x \mid u_x \leq p_x \}$\tcp*[h]{Compute the sample}\;\\
\lForEach(\tcp*[h]{weights for sampled points}){$x \in S$}{$w'_x \gets w_x/p_x$}
\Indm
\tcp{{\bf Query processing with feedback}}
\Indp
\KwIn{$Q$ such that $|Q|=k$}
\KwOut{Estimate of $V(Q \mid X,\vecw)$}
$\hat{V} \gets V(Q \mid S,\vecw')$\tcp*{Sample-based estimate of $V(Q \mid X,\vecw)$}\;
\lIf(\tcp*[h]{Return estimate and break}){$\hat{V} > C$}{\Return $\hat{V}$}
\tcp{Increase sample size}
$V \gets V(Q\mid X,\vecw)$ \tcp*{Can also use estimate}
$\vecp \gets (2C/V)*\vecp$ \tcp*[h]{increase sampling probabilities}\;\\
$C \gets V/2$ \tcp*[h]{New cost threshold}\;\\
$S \gets \{ x \mid u_x \leq p_x \}$\tcp*[h]{Update the sample}\;\\
\lForEach(\tcp*[h]{update weights}){$x \in S$}{$w'_x \gets w_x/p_x$}\;
\Return{$V$}\;

\Indm
}
\end{algorithm}

\section{Clustering cost oracle} \label{oracle:sec}

A clustering cost oracle preprocesses the data $(X,\vecw)$ and
computes a compact structure from which clustering cost queries $Q$
can be efficiently approximated.  Our basic oracle, Algorithm~\ref{oracle:alg},
inputs the data, iteration limit $\ell\geq 1$, $C>0$, and
$\epsilon>0$.  We will establish the following
\begin{theorem}
Algorithm~\ref{oracle:alg}  computes a weighted sample $(S,\vecw')$.
The inverse probability estimator \eqref{inverseprobest:eq}  provides the pps quality guarantees of Theorem~\ref{ppsdedicated:thm} and Corollary~\ref{weakpps:lemma} for sets $Q$ with clustering cost at least $\alpha C$, where $\alpha\leq 1$.
\end{theorem}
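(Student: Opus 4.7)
Proof plan. The heart of the argument is to verify, for an arbitrary iteration $i$ of the preprocessing loop, that the candidate probability vector $\vecp'$ built from the current centroid set $M=M_i$ and cost $v_i = V(M_i\mid X,\vecw)$ satisfies the weak pps lower bound
\[
p'_x \;\geq\; \alpha\,\epsilon^{-2} * \psi_x^{(Q\mid X,\vecw)}
\]
simultaneously for every $Q$ with $V(Q\mid X,\vecw)\geq \alpha C$. Since the returned $\vecp$ is one of the candidate $\vecp'$'s (the one with smallest $L_1$ norm), the same bound will hold for $\vecp$, and then Theorem~\ref{ppsdedicated:thm} together with Corollary~\ref{weakpps:lemma} yields the claimed pps estimation guarantees for the estimator \eqref{inverseprobest:eq}.

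The key reduction is to push the one2all inequality of Theorem~\ref{MtoMO:thm} through the scaling factor $\max\{1,v_i/C\}$. Fix $Q$ with $V(Q\mid X,\vecw)\geq \alpha C$. By Theorem~\ref{MtoMO:thm} applied to $M_i$,
\[
\pi_x^{(M_i\mid X,\vecw)} \;\geq\; \min\!\Bigl\{1,\tfrac{V(Q\mid X,\vecw)}{v_i}\Bigr\}\,\psi_x^{(Q\mid X,\vecw)}.
\]
I would then case-split on the sign of $v_i - C$. If $v_i\leq C$ then $\max\{1,v_i/C\}=1$, and either $V(Q)\geq v_i$ (so the $\min$ equals $1$ and we immediately get $\pi_x\geq \psi_x\geq \alpha\psi_x$), or $V(Q)<v_i\leq C$, in which case $\pi_x \geq (V(Q)/v_i)\psi_x \geq (\alpha C/v_i)\psi_x \geq \alpha\psi_x$. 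If instead $v_i>C$ then $\max\{1,v_i/C\}=v_i/C$, and one checks similarly (splitting on whether $V(Q)\geq v_i$ or $V(Q)<v_i$) that $(v_i/C)\pi_x \geq (V(Q)/C)\psi_x \geq \alpha\psi_x$. In all cases,
\[
\max\{1,v_i/C\}\,\pi_x^{(M_i\mid X,\vecw)} \;\geq\; \alpha\,\psi_x^{(Q\mid X,\vecw)}.
\]

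Multiplying through by $\epsilon^{-2}$ and taking $\min\{1,\cdot\}$ on both sides (monotone) gives
\[
p'_x \;=\; \min\!\Bigl\{1,\max\{1,v_i/C\}\,\epsilon^{-2}\,\pi_x^{(M_i\mid X,\vecw)}\Bigr\} \;\geq\; \min\{1,\alpha\epsilon^{-2}\psi_x^{(Q\mid X,\vecw)}\} \;=\; \alpha\epsilon^{-2}*\psi_x^{(Q\mid X,\vecw)},
\]
which is exactly the hypothesis of Theorem~\ref{ppsdedicated:thm}. Since the argument held for every iteration $i$, and since the selection rule $|\vecp'|_1<s$ only picks among these valid candidates, the returned $\vecp$ enjoys the same lower bound. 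An unbiased Poisson (or VarOpt) sample $S$ with these probabilities and the Horvitz–Thompson weights $w'_x=w_x/p_x$ therefore produces the estimator $V(Q\mid S,\vecw')$ satisfying the CV and Chernoff–Bernstein bounds of Theorem~\ref{ppsdedicated:thm} and the overestimation bound of Corollary~\ref{weakpps:lemma}, for every $Q$ with $V(Q\mid X,\vecw)\geq \alpha C$.

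The only real obstacle is the four-way case analysis above: one must be careful that the two clippings—the outer $\min\{1,\cdot\}$ in the definition of $p'_x$ and the $\min$ inside the one2all bound—interact correctly with the scaling $\max\{1,v_i/C\}$, so that the inequality chain survives regardless of whether $v_i$ is above or below $C$ and whether $V(Q)$ is above or below $v_i$. Everything else is a direct appeal to Theorem~\ref{MtoMO:thm}, Theorem~\ref{ppsdedicated:thm} and Corollary~\ref{weakpps:lemma}.
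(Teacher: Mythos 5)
Your proof is correct and follows the same line of reasoning as the paper's: pass the one2all bound of Theorem~\ref{MtoMO:thm} through the scaling factor $\max\{1,v_i/C\}$ to show each candidate $\vecp'$ dominates $\alpha\epsilon^{-2}*\boldpsi^{(Q)}$ for every $Q$ with $V(Q)\geq\alpha C$, then invoke Theorem~\ref{ppsdedicated:thm} and Corollary~\ref{weakpps:lemma}. The only difference is cosmetic --- the paper routes through Corollary~\ref{one2allplus:coro} and phrases the $\alpha<1$ case as ``within $V(Q)/C$ of pps'' while you carry out the four-way case split explicitly, which makes the weak-pps ($\alpha<1$) half of the claim slightly more transparent.
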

\begin{proof}
The algorithm computes probabilities $\boldpi:X$ that upper bound the
base pps probabilities for all sets $Q$ of cost $V(Q)\geq
C$.  We perform $\ell$ iterations of {\sc kmeans++} on $(X,\vecw)$.  
Each iteration $i$ computes a new centroid  $m_i$ and we can also
compute one2all base probabilities 
$\boldpi^{(M_i)}$, where  $M_i=\{m_1,\ldots,m_i\}$ for the set of the first $i$ centroids. 
Note that the computation of the
one2all probabilities and the
cost $V(M_i \mid X,\vecw)$ does utilizes the distance computations and the assignment of points to the nearest centroid that is
already performed by {\sc 
  kmeans++}.
Each iteration $i$ yields {\em candidate} base
probabilities 
$\boldpi \gets \max\{1,  \frac{V(M_i)}{C}\}* \boldpi^{(M_i)}$.
From Corollary~\ref{one2allplus:coro}, each candidate base probabilities 
upper bound the base pps probabilities of all $Q$ with cost $V(Q)\geq
C$.
Finally, we retain, among the $\ell$ candidates, the one with minimum
sample size  $|\epsilon^{-2}*\boldpi|_1$.
This sweet-spot search replaces simply using $M_\ell$.  Note that
the size of the sample may increase with $i$, when 
$V(M_i)$ drops slower than the increase in $i$.  
In our experiments 
we demonstrate the potential significant benefits of this adaptive optimization.

 We proceed and compute a weighted sample $S$ (independent or varopt) according to
 probabilities $\vecp \gets \epsilon^{-2}*\boldpi$.  For each $x\in S$
 we associate a weight $w'_x = w_x/p_x$.  
We process queries $Q$ by computing 
the clustering cost $V(Q \mid (S,\vecw')$, which is equal to the 
inverse probability estimator \eqref{inverseprobest:eq} of the 
clustering cost of $Q$ over $(X,\vecw)$. 

Since $\vecp$ upper bound pps sampling probabilities for any $Q$ with
$V(Q) \geq C$ and are within $V(Q)/C$ of the pps probabilities for any
$Q$, the quality guarantees of Theorem~\ref{ppsdedicated:thm} and
Lemma~\ref{weakpps:lemma} follow.
\end{proof}

Finally, note that the size of the oracle structure and the computation of each query 
are both linear in our sample size $|\vecp|_1$. 
The sample size we obtain using $M_i$ is (in expectation)
\begin{equation} \label{sizeexact:eq}
|\vecp|_1 = \min_{i\in[\ell]} \max\{1,\frac{V(M_i)}{C}\}
\epsilon^{-2}*\boldpi^{(M_i)} \ .
\end{equation}  
A useful simple rough approximation for sample size that does not use
the size parameter is
\begin{equation} \label{sizeapprox:eq}
(8 \rho^2
  |M|+2\rho)\frac{V(M_i)}{C}\epsilon^{-2} = O(i \frac{V(M_i)}{C}) \propto i V(M_i)\ .
\end{equation}

\subsection{Feedback oracle}
 We consider here constructing an oracle that provides quality
 guarantees for all $Q$ of size $k$.   

Assume first that we are provided with $V^*$, which is the optimal clustering
 cost with $k$ clusters.   
\begin{lemma}
Applying
 Algorithm~\ref{oracle:alg} with $\ell=2k$ and $C = V^*$ provides us
 with a sample of expected size $O(k\epsilon^{-2})$  that provides the
 statistical guarantees of Theorem~\ref{ppsdedicated:thm} (with $\alpha=1$).
\end{lemma}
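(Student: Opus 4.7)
The plan is to combine two ingredients already established: the overhead bound from Corollary~\ref{one2allplus:coro} and the bi-criteria approximation guarantee of {\sc kmeans++}. The statistical guarantees are essentially immediate, so the substantive work is in bounding the expected sample size.

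First I would verify the quality guarantees. Since $V^{*}$ is the minimum $k$-clustering cost, every $Q$ with $|Q|=k$ satisfies $V(Q\mid X,\vecw)\geq V^{*}=C$. Hence the set $\calQ$ of all $k$-subsets is contained in $\{Q\mid V(Q)\geq V(M_i)/r_i\}$ with $r_i=\max\{1,V(M_i)/C\}$ at every iteration $i$. Applying Corollary~\ref{one2allplus:coro} to each iteration's $M_i$ shows that the candidate probabilities $\max\{1,V(M_i)/C\}*\boldpi^{(M_i)}$ upper bound $\boldpsi^{(\calQ\mid X,\vecw)}$, so once multiplied by $\epsilon^{-2}$ the resulting sampling probabilities yield the Theorem~\ref{ppsdedicated:thm} guarantees with $\alpha=1$ for every $Q\in\calQ$. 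Since this holds for every candidate, it holds for the minimum-size candidate that the algorithm actually retains.

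For the expected size bound, I would invoke the bi-criteria guarantee of {\sc kmeans++} quoted in the introduction \cite{AggarwalSK:RANDOM2009,Wei:NIPS2016}: for a suitable constant $\beta$ (which we can take to be $2$), the expected cost of the first $\beta k$ centroids is within a constant factor $c$ of $V^{*}$, i.e.\ $\mathbb{E}[V(M_{2k})]\leq c V^{*}=cC$. Combining this with the one2all overhead bound $|\boldpi^{(M_{2k})}|_1\leq 8\rho^2(2k)+2\rho = O(k)$ yields
\[
\mathbb{E}\bigl[\,\bigl|\max\{1,V(M_{2k})/C\}\,\epsilon^{-2}*\boldpi^{(M_{2k})}\bigr|_1\bigr]
\;\leq\; \epsilon^{-2}\cdot\mathbb{E}[\max\{1,V(M_{2k})/C\}]\cdot O(k)
\;=\; O(k\epsilon^{-2}).
\]
Because the algorithm keeps the candidate with smallest $|\vecp|_1$ among all $\ell=2k$ iterations, the expected size it outputs is at most this quantity, giving the claimed $O(k\epsilon^{-2})$ bound.

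The main obstacle, and the only nontrivial point, is invoking the bi-criteria guarantee in the right form: we need an in-expectation bound on $V(M_{2k})$ against the \emph{optimum} $V^{*}$, not against some other benchmark, and we need the constant $\beta$ to be small enough that $\ell=2k$ iterations suffice. Standard formulations from \cite{AggarwalSK:RANDOM2009,Wei:NIPS2016} provide exactly this. A minor technical care is that $V(M_{2k})$ is random, so one must take expectation after applying the deterministic overhead bound from Corollary~\ref{one2allplus:coro}; once this is done, linearity of expectation closes the argument.
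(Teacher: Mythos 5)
Your proof is correct and takes essentially the same route as the paper: invoke the bi-criteria guarantee $\E[V(M_{2k})]=O(V^*)$, combine it with the $O(k)$ one2all overhead bound from Corollary~\ref{one2allplus:coro}, and observe that the algorithm retains the smallest candidate. You are somewhat more explicit than the paper about the order of operations (bounding the overhead deterministically, then taking expectation over the randomness of $V(M_{2k})$, using $\E[\max\{1,V(M_{2k})/C\}]\leq 1+\E[V(M_{2k})/C]=O(1)$), which the paper glosses over.
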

\begin{proof}
 From state-of-the-art bi-criteria bounds~\cite{Wei:NIPS2016}, we have that 
$\E[V(M_{2k} \mid X,\vecw)]/V^*]  = O(1)$.  The expected size of 
the sample, even with one2all applied to $M_{2k}$, is at most $2 
\frac{V(M_{2k})}{C} k \epsilon^{-2} 8\rho^2 = O(k\epsilon^{-2})$.

We comment that we can apply sweet-spot selection of $M_i$ even though
$V(M_{2k})$ is not known before 
iteration $2k$, by using the rough approximation \eqref{sizeapprox:eq} 
instead of exact sample sizes.  This allows for 
retaining one candidate $\boldpi$ with the {\sc kmeans++}
iterations.
\end{proof}

Note, however, that we do not know the optimal 
  clustering cost $V^*$.  One solution is to underestimate it:
From the bi-criteria bounds we can compute large enough
$\alpha$ (using Markov inequality) so that  within  the desired
confidence value,
$V(M_{2k}\mid X)/V^*\leq \alpha$.
We can then apply the algorithm with $C=V(M_{2k})/\alpha$.  But such  a
worst-case $\alpha$ is large (see Section~\ref{exper:sec}) and forces 
a proportional increase in sample size, often needlessly so.

We instead propose a {\em feedback} oracle, detailed in Algorithm~\ref{feedbackoracle:alg}.  We initialize with the
basic oracle (Algorithm~\ref{oracle:sec}) with $\ell=2k$ and
$C=V(M_{2k})$ to obtain probabilities $\vecp$.  We draw a weighted
sample $(S,\vecw')$.  The oracle processes a query $Q$ as follows.  If $V(Q \mid S,\vecw') \geq C$, it returns this estimate.
Otherwise,  we compute and return the exact cost $V(Q \mid X,\vecw)$
and update the sample at the base of the oracle so that it supports
queries with cost $\geq C= V(Q \mid X,\vecw)/2$.

Note that each oracle call that results in an update halves
(at least) the cost threshold $C$.  Since we start with $C$ that is in expectation within a constant factor $\alpha$ from the optimal $k$ clustering cost,
the expected total number of oracle calls that result in an update, is
$\leq \log_2 \alpha$.
Moreover, the sample size is increased only in the face of evidence of
a clustering with lower cost.   So the final size uses $C \geq V^*/2$.
For smoother estimates as the samples size increases,  we coordinate
the samples by using the same randomization $\vecu$.
That way, new points are added to
 the sample when the size increases, but no points are removed.  

  Our feedback oracle provides the following statistical guarantees on
  estimate quality.
\begin{theorem}
A query $Q$ processed when $V(Q \mid X,\vecw)\geq \alpha C$ ($\alpha
\leq 1$) has the statistical
guarantees as stated in Theorem~\ref{ppsdedicated:thm}.   When $V(Q
\mid X,\vecw) = \alpha C$ for ($\alpha<1$),  we either return an 
exact value or an 
overestimate with probability bounded by Corollary~\ref{weakpps:lemma}.
\end{theorem}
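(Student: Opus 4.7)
The plan is to reduce both claims to Theorem~\ref{ppsdedicated:thm} and Corollary~\ref{weakpps:lemma} by maintaining an invariant on the pair $(\vecp,C)$ throughout query processing: namely, that there is a centroid set $M$ (fixed at initialization) so that $\vecp$ coincides with an output of Algorithm~\ref{oracle:alg} for this $M$, current $C$, and $\epsilon$, i.e., $\vecp = \epsilon^{-2}*\max\{1,V(M\mid X,\vecw)/C\}*\boldpi^{(M\mid X,\vecw)}$. By Corollary~\ref{one2allplus:coro}, this invariant implies the convenient consequence that for every centroid set $Q'$, $\vecp \geq \min\{1,V(Q'\mid X,\vecw)/C\}\,\epsilon^{-2}*\boldpsi^{(Q'\mid X,\vecw)}$, i.e., weak-pps coverage with parameter $\min\{1,V(Q')/C\}$.

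The base case of the induction is handed to us by the initialization block of Algorithm~\ref{feedbackoracle:alg}, which simply invokes Algorithm~\ref{oracle:alg} with $\ell=2k$ and $C=V(M_{2k})$. For the inductive step, consider an update triggered by a query $Q$ with exact cost $V=V(Q\mid X,\vecw)$: the reassignments $C\gets V/2$ and $\vecp\gets(2C_{\text{old}}/V)*\vecp_{\text{old}}$ scale the ``excess'' factor $\max\{1,V(M)/C\}$ by precisely $C_{\text{old}}/C_{\text{new}}=2C_{\text{old}}/V$, reproducing the same functional form with the new $C$ and thus preserving the invariant. Coordinated sampling via the pre-drawn $\vecu\sim U[0,1]$ ensures that as $\vecp$ grows only new points are added to $S$, so the inverse-probability weights $\vecw'_x=w_x/p_x$ remain valid samples from the current distribution.

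With the invariant in place, both parts of the theorem follow. For the first claim, if $V(Q\mid X,\vecw)\geq\alpha C$ with $\alpha\leq 1$, the invariant yields $\vecp\geq\alpha\,\epsilon^{-2}*\boldpsi^{(Q\mid X,\vecw)}$; the Horvitz--Thompson estimate $\hat V=V(Q\mid S,\vecw')$ therefore satisfies Theorem~\ref{ppsdedicated:thm} with parameter $\alpha$, and the alternative return of the exact $V(Q\mid X,\vecw)$ is trivially within those bounds. For the second claim, if $V(Q\mid X,\vecw)=\alpha C$ with $\alpha<1$, the non-exact return branch fires only when $\hat V>C=V(Q)/\alpha$, i.e., when the estimator overshoots by a factor of at least $1/\alpha$; the probability of this event is precisely what Corollary~\ref{weakpps:lemma} bounds under weak-pps with parameter $\alpha$.

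The main obstacle I anticipate is the interaction between the multiplicative update $(2C/V)*\vecp$ and the $\min\{1,\cdot\}$ cap, especially in the atypical regime $V>2C_{\text{old}}$ that can arise when a pre-update estimate badly undershoots: there the scaling factor is strictly less than $1$ and a naive algebraic identity would appear to weaken the invariant. I would resolve this coordinate-wise, using that any coordinate already saturated at $p_x=1$ automatically satisfies every weak-pps requirement (since $\alpha\epsilon^{-2}*\psi_x^{(Q)}\leq 1$), while on unsaturated coordinates the cap is inactive and the linear scaling reproduces the desired relation exactly.
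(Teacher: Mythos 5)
Your overall strategy is sound and, as far as I can tell, matches the implicit argument the paper sketches before stating this theorem (the paper offers no explicit proof): maintain that $(\vecp,C)$ always has the form of an Algorithm~\ref{oracle:alg} output for a fixed centroid set $M$, invoke Theorem~\ref{MtoMO:thm}/Corollary~\ref{one2allplus:coro} to get weak-pps coverage with parameter $\min\{1,V(Q)/C\}$, and then read off the two claims from Theorem~\ref{ppsdedicated:thm} and Corollary~\ref{weakpps:lemma}. Your treatment of the base case, the derivation of the coverage inequality, and the observation that the non-exact-return branch fires precisely on the overshoot event $\hat V > C = V(Q)/\alpha$ are all correct.

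The gap is in the fix you propose for the atypical regime $V > 2C_{\text{old}}$, and it is a real gap, not just a presentational one. When $2C_{\text{old}}/V < 1$, a coordinate that was previously saturated at $p_x^{\text{old}}=1$ is scaled down to $p_x^{\text{new}}=2C_{\text{old}}/V<1$. Your argument that ``any coordinate already saturated at $p_x=1$ automatically satisfies every weak-pps requirement'' applies only to the \emph{old} state; after the downscaling the coordinate is no longer saturated, and $p_x^{\text{new}}$ can fall strictly below the target $\min\{1, r_{\text{new}}\pi_x\}$ whenever $\pi_x$ strictly exceeds $1/r_{\text{old}}$. Concretely: saturation gives $r_{\text{old}}\pi_x\geq 1$, hence $r_{\text{new}}\pi_x = (C_{\text{old}}/C_{\text{new}})r_{\text{old}}\pi_x \geq C_{\text{old}}/C_{\text{new}} = p_x^{\text{new}}$, with the inequality strict for strict saturation, so the invariant is lost on exactly these coordinates. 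A second, independent failure occurs on unsaturated coordinates if the branch of $\max\{1,V(M)/C\}$ switches (i.e.\ $C_{\text{new}}$ rises above $V(M)$), since then the scaling factor $C_{\text{old}}/C_{\text{new}}$ no longer equals the ratio of excess factors. Moreover, in this regime some $p_x$ strictly decrease, so the coordinated update $S\gets\{x: u_x\le p_x\}$ can \emph{remove} points, contradicting the paper's explicit claim that ``no points are removed.'' The right way out is not a coordinate-wise patch but to note that the paper is implicitly assuming $V\le C_{\text{old}}$ on the update branch (``each oracle call that results in an update halves (at least) the cost threshold $C$''), which makes the scaling factor $\ge 2$ and keeps both the invariant and the monotone-growth-of-$S$ property; the event $V>2C_{\text{old}}$ while $\hat V\le C_{\text{old}}$ is precisely the severe-undershoot event whose probability Theorem~\ref{ppsdedicated:thm} (with $\alpha=1$, $\delta>1/2$) already bounds, so the theorem should be read as conditional on the state not having been corrupted by such an event. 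Making that conditioning explicit, rather than claiming an unconditional invariant, is what is missing from your write-up.
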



\begin{algorithm}[h]\caption{Clustering Wrapper  \label{optsamples:alg}}
 \DontPrintSemicolon 
 {\small 
  \KwIn{points $X$, weights $\vecw>\boldsymbol{0}$, 
   $\epsilon>0$, a clustering algorithm $\calA$ that inputs a 
    weighted set of points and returns $Q\in \calQ$.}
\KwOut{Set $Q$ of $k$ centroids with statistical guarantees on quality 
  over $X$ that match within $(1+\epsilon)$  those provided by $\calA$}
 \tcp{Initialization}
$c \gets \infty$
\tcp{Apply  {\sc kmeans++} to $(X, \vecw)$}
\ForEach{iteration $i\in[2k]$ of {\sc kmeans++}$(X,\vecw)$}
{
$m_i \gets $ next centroid\;\\
$v_i \gets V(\{m_1,\ldots,m_i\} \mid X,\vecw)$\tcp*{Clustering cost}
\If(\tcp*[h]{sweet-spot one2all prob. \eqref{OmegaMOQk:eq}}){$i v_i < c$}{$\boldpi \gets \boldpi^{(M\mid X,\vecw)}$;\; $V_M 
  \gets v_i$ ;\; $c \gets i v_i$}
}
$r \gets \frac{v_M}{v_{2k}}$\tcp*{Initial sample size increase 
  factor}\; 
$Q^* \gets \{m_1,\ldots,m_k\}$; $\overline{V}\gets v_k$\tcp*{Best so far and upper bound}
\lForEach(\tcp*[h]{Randomization for sampling}){$x\in X$}{$u_x \sim  U[0,1]$}
$S \gets \{ x \mid u_x \leq r \epsilon^{-2}\pi_x \}$  \tcp*[h]{Initial sample. $O(|S|)$ given 
    preprocessed $\boldpi$}\;\\
\lForEach(\tcp*[h]{weights for sampled points}){$x \in S$}{$w'_x \gets w_x/\min\{1,r \pi_x\}$}

\tcp{Main Loop}
\Repeat{{\bf True}
}{
  \tcp{Cluster the sample $S$}

  $Q \gets \calA(S,\vecw')$\tcp*{Apply algorithm $\calA$ to sample}
  $V_Q \gets  V(Q\mid X,\vecw)$  \tcp*{Exact or approx using a validation sample}
\lIf{$V_Q < \overline{V}$}{$\overline{V}\gets V_Q$\; $Q^* \gets Q$}
\If{$V_Q \leq (1+\epsilon) V(Q \mid S, \vecw')$
  {\bf and }
  $V_Q \geq V_M/r$}{{\bf break}}
$r \gets \max\{2,  V_Q/V_M \} r$ \tcp*{Increase the sample size parameter}
\Repeat(\tcp*[h]{Increase sample size until $Q$ is cleared}){$V(Q\mid S,\vecw') > \min\{(1+\epsilon) \overline{V}, (1-\epsilon)V_Q$\}}
{$S \gets \{ x \mid u_x \leq r \epsilon^{-2}\pi_x \}$  \tcp*[h]{Add points to sample}\;\\
  \lForEach(\tcp*[h]{weights for sampled points}){$x \in S$}{$w'_x \gets w_x/\min\{1,r \pi_x\}$}
  $r \gets 2r$
  }
}
\Return{$Q^*$}
 }
\end{algorithm}

\section{Clustering  wrapper} \label{wrapper:sec}
The input to a clustering problem is  a (weighted)
set of points $(X,\vecw)$ and $k>0$.  The goal is to compute a 
set $Q$ of $k$ centroids aimed to 
minimize the clustering cost $V(Q\mid X,\vecw)$. 

We present a
{\em wrapper,}  Algorithm~\ref{optsamples:alg},  which inputs a clustering problem, a clustering algorithm $\calA$, and $\epsilon>0$, and returns a set of $Q$ of $k$ centroids.   The 
wrapper computes weighted samples $(S,\vecw')$ of the input points $(X,\vecw)$
and applies $\calA$ to $S$.  It then performs some tests on the
clustering $Q$ returned by $\calA$,  based on which, it either
terminates and returns a clustering, or adaptively increases the sample size.  The wrapper provides
a statistical guarantee that the quality of the clustering $Q$ returned by $\calA$ on the sample $(S,\vecw')$ reflects, within $(1+\epsilon)$, its
quality on the data.

The first part of the wrapper is similar to our clustering oracle Algorithm~\ref{oracle:alg}.   
We perform $2k$ iterations of {\sc kmeans++} tor $(X,\vecw)$ to compute a
list $\{m_i\}$ of centroids and respective clustering costs
$v_i=V(\{m_1,\ldots,m_i\}\mid X,\vecw)$.  While performing this computation, we
identify a 
sweet-spot $M=\{m_1,\ldots,m_i\}$ using the coarse 
estimate~\eqref{sizeapprox:eq} of sample sizes and retain $\boldpi:X$,
which are the one2all base probabilities for $M$.  Our wrapper separately maintains
a size parameter $r$, that is initially set to  $r =  V_i/v_{2k}$.  
 From Theorem~\ref{MtoMO:thm}, 
the probabilities $r*\boldpi$ upper bound the base pps 
probabilities for all $Q$ with clustering cost $V(Q \mid (X,\vecw)) 
\geq V_M/r$.    Initially,   $r*\boldpi$ is set for cost above $v_{2k}$.
We then selects a 
fixed randomization $\vecu$, that will allow for coordination of samples selected with 
different size parameters.  

 The main iteration computes a weighted sample
 $(S,\vecw')$ selected with probabilities $\epsilon^{-2}r * \pi$.  
Our algorithm is applied to the sample $Q\gets \calA(S,\vecw')$  to 
obtain a set $Q$ of $k$ centroids.  We compute (or
estimate from a validation sample) the clustering cost over the full
dataset $V_Q = V(Q
\mid x,\vecw)$.  If $V_Q$ is not lower than $V_M/r$ and is also not
much higher than the sample clustering cost $V(Q \mid S,\vecw')$, we
break and return the best $Q$ we found so far. 
 Otherwise, we increase the size parameter $r$, augment the
sample accordingly, and iterate.  The increase in the size parameter
at least doubles it and is set so that (i)~We have $V_M/r \leq \overline{V}$, where
$\overline{V}$ is the smallest clustering cost encountered so far.  (ii)~
The set $Q$ that was underestimated by the sample has estimate
that is high enough to clear it from $\overline{V}$ or to comprise an
accurate estimate.

\subsection{Analysis}
We show that if our algorithm $\calA$ provides a certain approximation
ratio on the quality of the clustering, then this ratio would also
hold (approximately, with high confidence) over the full data set.  A
similar argument applies to a bicriteria bound.

  The wrapper works with an optimistic initial choice of $r$, but increases it
  adaptively as necessary.   The basis of the correctness of our
  algorithm is that we are able to detect when our choice of $r$ is
  too low.  

 There are two separate issues that we tackle with adaptivity instead
 of with a pessimistic worst-case bound.
  The first is also addressed by our  feedback oracle:  
For accurate 
 estimates we need $V_M/r$ to be lower than $V^* = V(Q^* \mid
 X,\vecw)$  (the optimal clustering
 cost over $X$), which we do not know.  Initially, $V_M/r=v_{2k}$,  which may
 be higher than $V^*$.   We  increase $r$
when we find a clustering $Q$ with $V(Q) < V_M/r$.  
The potential ``bad''  event is  when the optimum clustering $Q^*$ has 
$V^* \ll V_M/r$ but is overestimated by a large amount in the sample
resulting in the  sample optimum $V^*_S$ is much larger than $V^*$.
As a consequence, 
the clustering algorithm $\calA$ applied to the sample can find 
$Q$, for which the estimate is correct, and has cost above $V_m/r$. 
The approximation ratio over the sample is $V(Q \mid S)/V^*_S$ which
can be much better than the true (much weaker) approximation ratio 
$V(Q \mid S)/V^*$ over the full data.   

  This bad event happens when $V^*_S \gg V^*$.  But note that in
  expectation, 
 $\E[V^*_S] \leq V^*$.   Moreover, the probability of this bad event
is bounded by $\exp(-\epsilon^{-2}/6)$ (see Theorem~\ref{ppsdedicated:thm} and Corollary~\ref{weakpps:lemma}).
We can make the probability of such bad event smaller by augmenting the wrapper as
follows.  When the wrapper is ready to return $Q$, we generate multiple samples of the same size and apply
$\calA$ to all these samples and take the best clustering
generated. If we find a clustering with cost below $V_m/r$, we
continue the algorithm.  Otherwise, we return the best $Q$.   The
probability that all the repetitions are ``bad''  drops exponentially with the number of
repetitions (samples) we use.

The second issue is inherent with optimization over samples.
Suppose now that  $r$ is such that
$V^*\geq V_M/r$.  The statistical guarantees provided by the sample are ``\eachg,'' 
which assure us that the cost is estimated well for a {\em given}
$Q$.   
In particular,  $V(Q^* \mid S,\vecw')$  is well concentrated 
around $V^*$  (Theorem~\ref{ppsdedicated:thm}). 
This means that $V^*_S$, the optimal clustering cost over $S$,  can only 
(essentially - up to concentration) be lower than $V^*$.  

When we consider all $Q$ of size $k$, potentially an infinite or a
very large number of them, it is
possible that some $Q$ has clustering cost  $V(Q\mid X,\vecw)
\gg V^*$  but is grossly underestimated in the sample,  having sample-based cost
$V(Q \mid S, \{w_x/p_x\}) < V^*$.  In this 
case, $V^*_S \ll V^*$ and our algorithm $\calA$ that is applied to the sample will be
fooled and can return such a 
$Q$.   The worst-case approach to this issue is to use a union or a
dimensionality bound that drastically increases sample size.   We get around it using an adaptive optimization framework
\cite{multiobjective:2015}. 

  We can identify and handle this scenario, however, by
testing $Q$ returned by the base algorithm to determine if our
algorithm was ``fooled'' by the sample:  
\begin{equation} \label{test:eq}
 V(Q \mid X,\vecw) \leq (1+\epsilon)  V(Q \mid S, \{w_x/p_x\}) \ . 
\end{equation}
by either computing the exact cost $V(Q \mid 
X,\vecw)$ or by drawing another 
independent  {\em validation} sample $S'$, and using the estimate 
$V(Q \mid S, \{w_x/p_x\})$. 
When the test fails, we increase the sample size and repeat.  In fact,
we at least double the sample size parameter, but otherwise increase it at least
to the point that  $V(Q \mid S, \{w_x/p_x\})$ can no longer fool the
algorithm.  The only bad event possible here is that the sample
optimum is much larger than $V(Q^*)$.   But as noted, when $V^*\geq
V_M/r$ the probability of this for a particular sample is bounded by Theorem~\ref{ppsdedicated:thm}.
Moreover, note
that each increase of the sample size significantly strengthens the
concentration of estimates for particular $Q$.   Thus, the worst quality, over
iterations, in which $Q^*$ is estimated in the sample is dominated by the
first iteration with $V(Q^*) \geq V_M/r$.    Therefore,   the
approximation ratio over the sample is at least (up to the statistical
concentration of the estimates of $Q^*$) the ratio over the full data.

\ignore{
 We analyse the algorithm by considering the 
particular optimal clustering of $X$ ($\arg\min_{Q\in \calQ}
V(Q \mid X)$)  has sample cost $V(Q \mid S)$.  When $V(Q) \geq V_M/r$,
the sample cost is well concentrated around its true cost 
true cost $V(Q \mid X)$.  Therefore, the optimal clustering cost for $S$
(approximately) upper bounds the optimum on $X$.  When $V(Q) \ll
V_M/r$ then from Chernoff bounds,  it is unlikely to have a sample
cost that exceeds $V_M$ (probability decreases exponentially in
$\epsilon^{-2}$ and in $V_m/V(Q \mid X)$.  
}



\ignore{
Combining it all, we obtain that our wrapper provides the following
guarantee.  

Consider the final value of $r$ and the sample optimum (which is a random variable) $V_S^*$.
If we have $V^* \geq V_M/r$, then we have that  $V_S^*$ has an upper bound 
that is well concentrated around $V^*$.   Thus, the approximation 
ratio of $\calA$ carries over
using Theorem~\ref{ppsdedicated:thm}.

 The probability of the final $r$ being such that
$V^*  <  (1-\epsilon)V_M/r$, is the probability of the sample optimum
on that last sample has $V_S^* \geq V_M/r$.  This probability is
bounded by Corollary~\ref{weakpps:lemma}.   Since in particular,  the
sample cost of the optimum is overestimated by the sample.  
Note that we always have
$\E[V_S^*]\leq V^*$ so to have the guarantees of $\calA$ to carry
over with higher confidence, we can apply $\calA$ to multiple
samples of the same final size until we have sufficient confidence
that the sample optimum is sufficiently close to $V^*$ and thus
the approximation ratio over the sample is larger 
(within $(1-\epsilon)$) than over the data.

Summarizing it all we obtain the following guarantees.
Suppose that $\calA(S)$ returns a clustering $Q$ that has some bicriteria
or other approximation quality guarantee on $S$.   
When $V^* \geq V_M/r$,
carries the same guarantee as a clustering of 
the full data set $(X,\vecw)$, within confidence
 and NRMSE $\epsilon$.

}

\subsection{Computation}
The computation performed is dominated by two components.  The first is the $2k$ iterations of {\sc kmeans++} on the data, which are dominated by $2k|X|$ pairwise distance computations. These is the only component that must be performed over the original data.
The second is the application of $\calA$ to the sample.  When $\calA$ is (super)linear, it is dominated by the largest sample we use.

 Note that correctness does not depend on using $2k$ iterations.  We
 can apply one2all to any set $M$.  The only catch is that we may end
 up using a very large value of $r$ and a larger sample size.
An added optimization, which is not in the pseudocode, is to 
perform the {\sc kmeans++} iterations more sparingly.  Balancing
the size of the sample (the final parameter $r$ and its product with
$M$)  and the computation cost of 
additional iterations over the full data.   


\section{Experiments} \label{exper:sec}

  We performed illustrative  experiments for Euclidean
  $k$-means clustering on both synthetic and real-world data sets.
  We implemented our wrapper Algorithm~\ref{optsamples:alg} in numpy with the following
  base clustering algorithm $\calA$:  We use $5$ applications of  {\sc
  kmeans++} and take the set of $k$ centroids that has the smallest clustering cost.  This set is used as an initialization to
  20 iterations of Lloyd's algorithm.  The use of  {\sc kmeans++} to
initialize Lloyd's algorithm is a prevalent method in practice.

\paragraph{Synthetic data:}
We generated synthetic data sets by drawing $n$ points 
$X \subset R^d$ from a mixture of  $k$  Gaussians.
The means of the Gaussians are arranged to lie in a line with equal 
distances.  The standard deviations of the Gaussians were 
drawn from a range equal to the
 distance to the closest mean.
As a reference, we use the means of the 
 Gaussians as the {\em ground truth} centroids.

\paragraph{MNIST and Fashion MNIST  datasets:}
 We use the MNIST data set of images of handwritten digits \cite{mnist2010} and
 the Fashion data set of images of clothing items \cite{fashion-mnist2017}. Both data sets
 contain $n=6\times 10^5$ images coded as $d=784$ dimensional
 vectors. There are $k=10$ natural classes that correspond to the 10
 digits or 10 types of clothing items. Our reference ground-truth centroids were taken as the mean of each class.  

\paragraph{Worst-case bounds:}
We also  report, for comparison, sizes based on
state-of-the-art
coresets  constructions that provide the same
statistical guarantees.   The coreset sizes are determined using worst-case  upper
bounds. When constant factors are not specified, we {\em
  underestimate} them.  
The constructions can be viewed as having two components.  The first is an
upper bound on the size of a coreset that provides an  \eachg\ guarantee.  For our purposes, we would also need
a constructive way to obtain such a coreset.  The second is an upper  bound on the increase factor that is needed to
obtain an \allg\ guarantee.
We make here gross underestimates of worst-case coreset sizes.  The best bound on an
\eachg\ coreset size is slightly underestimated by
$8 \rho^2 k \epsilon^{-2} = 32 k \epsilon^{-2}$.  
For an actual construction, we can use {\sc kmeans++} and the tightest
worst-case bounds on the bicriteria approximation quality it provides.
The state of the art \cite{Wei:NIPS2016} is that with
$\beta k$ centroids we are (in expectation) within a factor of
$8(1+\varphi/(\beta-1))$ of the
optimal clustering cost (for Euclidean metric),
where $\varphi\approx 1.618$ is the golden
ratio.  There are no concentration results, and Markov inequality is used to obtain confidence bounds:  There is 50\% probability of being below twice
this expectation, which is $16(1+\varphi/(\beta-1))$.

The sample size depends both on the number of centroids we use and on the approximation quality: We need to minimize the product of $\beta$ and the approximation factor.  The expression $\beta(1+\varphi/(\beta-1))$ for $\beta>1$ is minimized at  $\beta\approx 1.53$ and the factor is $\approx 6.1$.  So we obtain an increase factor on sample size that is at least $6\times 16=96$.

Combining this all, we get an underestimate of $96 * 32 * k\epsilon^{-2}\approx 3000 k\epsilon^{-2}$ for that component.
We then consider the bound on the increase factor.
The state of the art bounds \cite{BravermanFL:arxiv2016}, based on
union bound and dimensionality arguments, are
 $O(\min(n,d/\epsilon))$ for Euclidean  and $O(\log k \log n)$ 
for general metric spaces. The hidden constant factors are not
specified and we underestimate them here to be  equal to $1$.
Combining, we underestimate the
best worst-case bound on coreset size by
$$\min\{n, 3000 k \epsilon^{-2} \min\{\log k \log n, \min(n,d/\epsilon)\}\ .$$

\paragraph{Adaptive bounds:}
Table~\ref{results:tab} reports the results of our experiments.
The first four columns report the basic parameters of each data set:
The number of points $n$, clusters $k$, dimension $d$, and the specified value of $\epsilon$ for the desired statistical
guarantee.  The middle columns report the
final sample size $|S|$ used by the algorithm
as a fraction of $n$, an underestimate on the corresponding coreset
size from state of the art worst-case bounds,  and the
gain factor in sample size by using our adaptive algorithm instead
of a worst-case bound.   We can observe significant benefit that increases with the size of the data sets.  On the  MNIST data, the worst-case approach provides no data reduction.

 The third  set of columns reports the accuracy of the sample-based estimate of the cost of the final clustering $Q$.  We can see that the error 
is very small (much smaller than $\epsilon$). We also report 
the quality of the final clustering $Q$ and the quality of the
 clusters obtained by applying {\sc kmeans++} to $X$, relative to the
 cost of the  ``ground truth'' centroids.   We can see that the cost
 of the final clustering is very close (in the case of MNIST, is
 lower) than the ``ground truth'' cost.  We also observe significant
 improvement over the cost of the {\sc kmeans++} centroids used for
 initialization.

 The last column reports the number of {\sc kmeans++} iterations on
 the full data set that was eventually used (the sweet spot value).
 This sweet spot optimizes for the overhead per sample size. This
 means that in effect fewer than $k$ kmeans++ iterations over the full
 data were used.

 Finally, we take a closer look at the benefit of the number of 
iterations of {\sc kmeans++}  that are performed on the full data
 set and used as input to one2all.
Figure~\ref{pluspluscosts:fig} shows properties for the sequence of
centroids $\{m_i\}$ returned by the {\sc kmeans++} algorithms on the
mnist, fashion-mnist, and one of the mixture synthetics datasets with
$k=20$ natural clusters.   The first is the clustering cost of each
prefix, divided by the cost of the ground truth clustering.    We can
see that on our synthetic data with spread out clusters there is
significant cost reduction with the first few iterations whereas with the
two natural data sets, the cost of the first (random) centroids is
within a factor 3 of the ground-truth cost with 10 centroids.
The second plot shows the  sample size ``overhead factor'' when we use
one2all on a prefix $\{m_1,\ldots,m_i\}$ of the {\sc kmeans++} centroids to obtain 
\eachg\ guarantees for $Q$ with clustering costs that are at least the
ground-truth cost.  To do so, we apply one2all to the set with $\alpha
= V(\{m_1,\ldots,m_i\}\mid X)/V_{\text{ground-truth}}$.  The
resulting overhead is proportional to 
$$\alpha i = \frac{V(\{m_1,\ldots,m_i\}\mid
  X)}{V_{\text{ground-truth}}} i\ .$$
We can see that with our two natural data sets, the sweet spot is
obtained with the first centroid.  Moreover, most of the benefit is
already obtained after 5 centroids.

  For the task of providing an efficient clustering costs oracle, we
would like to also optimize the final sample size.   We can see that
the sweet-spot choice of the number of centroids provides significant
benefit (order of magnitude reduction on the natural data set) compared to the worst-case choice (of using $\approx 1.5 k$ centroids).

  For clustering, the figure provides indication for the benefit of
  additional optimization which incorporates the cost versus benefit of additional
{\sc kmeans++} iterations that are performed on the full data set $X$.
As mentioned, we can adaptively perform additional iterations as to
balance its cost with the computation and accuracy we have on a sample
that is large enough to meet \eachg\ for $k$-clusters:  Even on data
sets where the sweet-spot required more iterations, a prohibitive cost
of performing them on the full data set may justify working with a
larger sample.

\begin{figure*}
\center 
\includegraphics[width=0.45\textwidth]{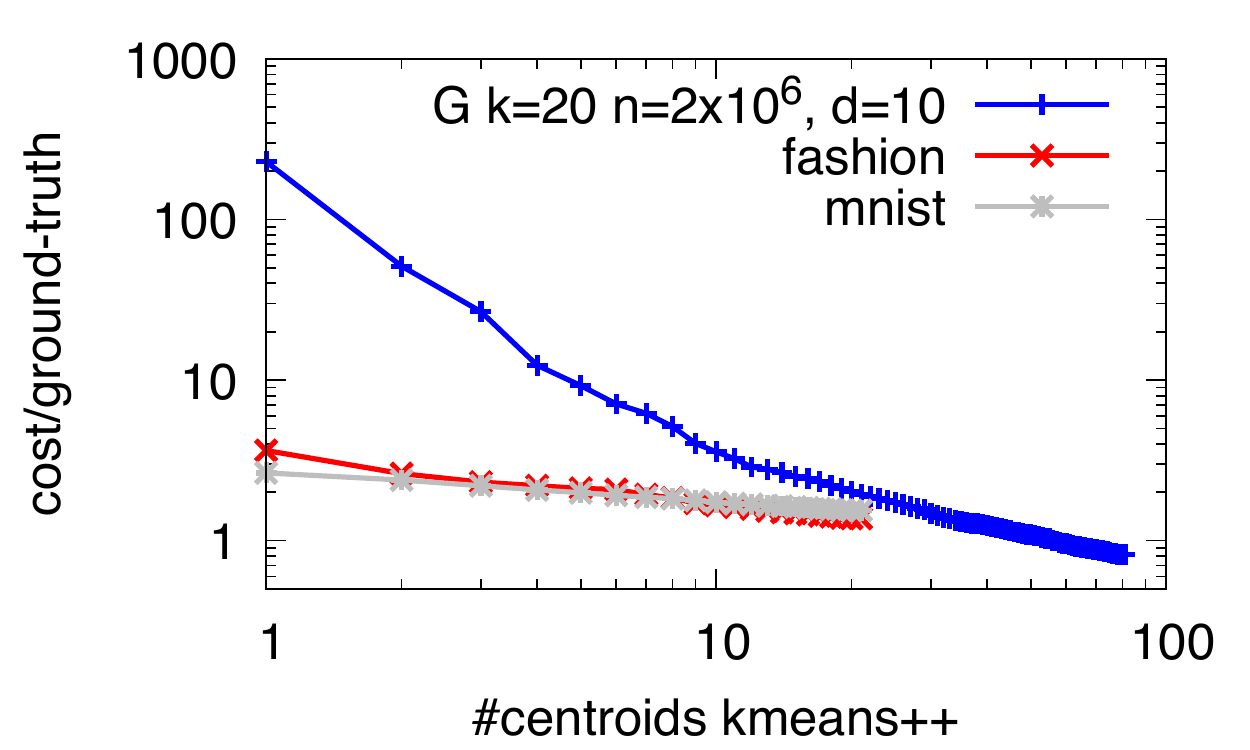}
\includegraphics[width=0.45\textwidth]{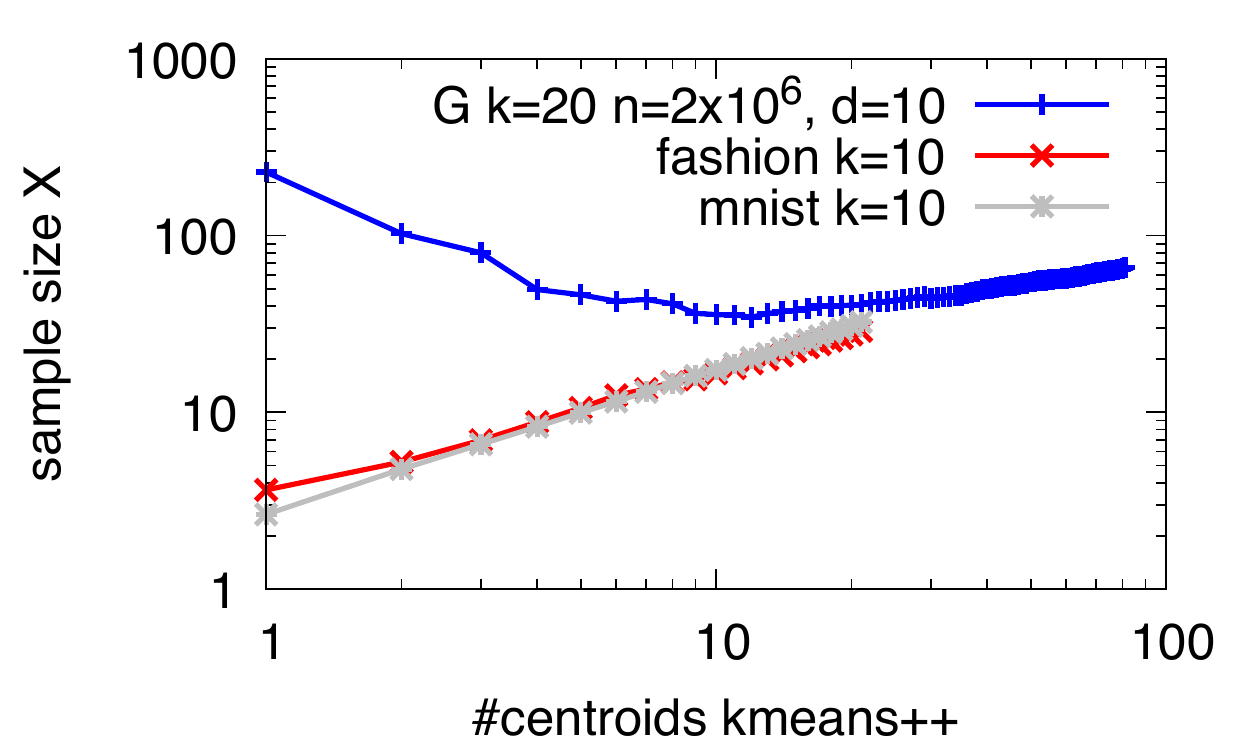}
\caption{Left: Clustering cost (divided by ground-truth clustering cost) for
  first $i$ centroids found by {\sc kmeans++} on full data set
  (averaged over 10 iterations).  Right: Sample size overhead factor when
  applying one2all to $\{m_1,\ldots,m_i\}$ to provide \eachg\
  guarantees for clustering costs that are at least the ground truth
  cost. \label{pluspluscosts:fig}}
\end{figure*}

\section{Conclusion}

We consider here clustering of a large set of points $X$ in a
(relaxed) metric space.
We present a clustering cost oracle,
 which estimates the clustering cost of an input set of centroids from
 a small set of sampled points, and
a clustering wrapper that inputs a base clustering algorithm which
 it applies to small sets of sampled points.  
At the heart of our design are our  {\em one2all}  base 
probabilities that are asigned to the points $X$.  These probabilities
are defined with respect to a set $M$ of 
centroids but yet,  a sample of size  $O(\alpha^{-1} |M|)$ (for 
  $\alpha\geq 1$) allows us to estimate the clustering cost 
of {\em any} set of centroids $Q$ with cost that is at least
$V(M)/\alpha$.  

 Our clustering cost oracle and wrapper work with weighted 
 samples taken using the {\em one2all}  probabilities for 
a set $M$ of centroids computing using the kmeans++
  algorithm.  Our oracle adaptively increase the sample size 
  (effectively increasing ``$\alpha$'') when encountering  $Q$ with cost lower than $V(M)/\alpha$. 
 Our wrapper increases the sample size when the clustering returned by 
 the base algorithm has sample cost that is either below $V(M)/\alpha$
 or does not match the cost over the full data (invoking a method of 
 adaptive optimization over samples~\cite{multiobjective:2015}).

A salient feature of
 our oracle and clustering wrapper  methods is that we
start with an optimistic small sample and increase it {\em adaptively}
only in the face of hard evidence that a larger sample is indeed
necessary for meeting the specified statistical 
 guarantees on quality.  Previous constructions use  {\em
   worst-case} size summary structures that can be much larger.
  We demonstrate experimentally the very large potential gain, of 
  orders of magnitude in sample sizes, when using our adaptive 
versus worst-case methods.

Beyond estimation and optimization of
 clustering cost, the set of distances of each $Q$  to the
one2all sample $S$ is essentially  a sketch of the full (weighted)
distance vector of $Q$ to $X$   \cite{multiw:VLDB2009}.
  Sketches of different sets $Q$ allow us to estimate relations
  between the respective full vectors, such as distance norms,
  weighted Jaccard similarity,  quantile aggregates, and more, which
  can be useful building blocks in other applications.

\notinproc{
Moreover, Euclidean
  $k$-means clustering is  a constrained rank-$k$  approximation
  problem, and  this connection facilitated interesting feedback between
  techniques designed for low-rank approximation and for
  clustering~\cite{CEM2P:STOC2015}. We thus hope that our methods and the general
method of optimization over multi-objective sample~\cite{multiobjective:2015} might lead to further
  progress on other low-rank approximation problems.
}

 \bibliographystyle{plain}
\bibliography{cycle} 

\end{document}